\documentclass{article} 
\usepackage{times}
\usepackage{iclr2025_conference}


\usepackage{bm}
\usepackage{amssymb}
\usepackage{amsfonts}       
\usepackage{amsmath}
\usepackage{amssymb}
\usepackage{mathtools}
\mathtoolsset{showonlyrefs}
\usepackage{mathtools}

\usepackage{physics}
\usepackage{mathrsfs}



\newcommand{\explain}[1]{\tag*{(#1)}}

\newcommand{\fS}{\mathcal{S}}
\newcommand{\fA}{\mathcal{A}}
\newcommand{\fY}{\mathcal{Y}}

\newcommand{\fF}{\mathcal{F}}
\newcommand{\fO}{\mathcal{O}}

\newcommand{\R}{\mathbb{R}}
\newcommand{\E}{\mathbb{E}}

\newcommand{\ns}{{|\fS|}}

\newcommand{\bop}{\mathcal{T}}

\DeclarePairedDelimiter\floor{\lfloor}{\rfloor}

\newcommand{\atd}{{\text{$A^\top$TD}}}
\newcommand{\attd}{{\text{$A^\top_t$TD}}}

\newcounter{assu_counter}
\numberwithin{assu_counter}{section}
\newtheorem{assumption}[assu_counter]{Assumption}
 
\newtheorem{theorem}{Theorem}
\newtheorem{lemma}[theorem]{Lemma}

\newcommand{\BlackBox}{\rule{1.5ex}{1.5ex}}  
\ifdefined\proof
    \renewenvironment{proof}{\par\noindent{\bf Proof\ }}{\hfill\BlackBox\\[2mm]}
\else
    \newenvironment{proof}{\par\noindent{\bf Proof\ }}{\hfill\BlackBox\\[2mm]}
\fi

\usepackage[hidelinks,colorlinks=true,linkcolor=blue,citecolor=blue]{hyperref}    
\usepackage{url}

\title{Revisiting a Design Choice in \\ Gradient Temporal Difference Learning}


\author{Xiaochi Qian \\
Department of Computer Science\\
University of Oxford\\
\texttt{xiaochi.joe.qian@gmail.com} \\
\And
Shangtong Zhang \\
Department of Computer Science \\
University of Virginia \\
\texttt{shangtong@virginia.edu} \\
}

%

\iclrfinalcopy 
\begin{document}

\maketitle

\begin{abstract}
Off-policy learning enables a reinforcement learning (RL) agent to reason counterfactually about policies that are not executed and is one of the most important ideas in RL. It, however, can lead to instability when combined with function approximation and bootstrapping, two arguably indispensable ingredients for large-scale reinforcement learning. This is the notorious deadly triad. The seminal work \citet{sutton2009convergent} pioneers Gradient Temporal Difference learning (GTD) as the first solution to the deadly triad, which has enjoyed massive success thereafter. During the derivation of GTD, some intermediate algorithm, called $A^\top$TD, was invented but soon deemed inferior. In this paper, we revisit this $A^\top$TD and prove that a variant of $A^\top$TD, called $A_t^\top$TD, is also an effective solution to the deadly triad. Furthermore, this $A_t^\top$TD only needs one set of parameters and one learning rate. By contrast, GTD has two sets of parameters and two learning rates, making it hard to tune in practice. We provide asymptotic analysis for $A^\top_t$TD and finite sample analysis for a variant of $A^\top_t$TD that additionally involves a projection operator. The convergence rate of this variant is on par with the canonical on-policy temporal difference learning.
\end{abstract}

\section{Introduction}
\label{sec intro}

Off-policy learning \citep{watkins1989learning,precup:2000:eto:645529.658134,maei2011gradient,sutton2016emphatic,li2019perspective} is arguably one of the most important ideas in 
reinforcement learning (RL, \citet{sutton2018reinforcement}).
Different from on-policy learning \citep{sutton1988learning},
where an RL agent learns quantities of interest of a policy by executing the policy itself,
an off-policy RL agent learns quantities of interest of a policy by executing a different policy.
This flexibility offers additional safety \citep{dulac2019challenges} and data efficiency \citep{lin1992self,sutton2011horde}.

Off-policy learning, however, can lead to instability if combined with function approximation and bootstrapping,
two other arguably indispensable techniques for any large-scale RL applications.
The idea of function approximation \citep{sutton1988learning} is to represent quantities of interest with parameterized functions instead of look-up tables.
The idea of bootstrapping \citep{sutton1988learning} is to construct update targets for an estimator by using the estimator itself recursively.
This instability resulting from off-policy learning, function approximation, and bootstrapping is called the deadly triad \citep{baird1995residual,sutton2018reinforcement,zhang2022thesis}.

The seminal work \citet{sutton2009convergent} pioneers the first solution to the deadly triad,
called 
Gradient Temporal Difference learning (GTD).
Thereafter,
GTD has been massively studied and enjoyed celebrated success \citep{sutton2009convergent,sutton2009fast,bhatnagar2009convergent,maei2010gq,maei2010toward,maei2011gradient,mahadevan2014proximal,liu2015finite,du2017stochastic,wang2017finite,yu2017convergence,xu2019two,wang2020finite,wai2020provably,ghiassian2020gradient,zhang2020average}.
During the derivation of GTD in \citet{sutton2009convergent},
an intermediate algorithm called $\atd$ was invented but soon deemed inferior.
In \citet{sutton2009convergent}, it is said that ``although we find this algorithm interesting, we do not consider it further here because it requires $\fO(K^2)$ memory and computation per time step''. 
Here, $K$ refers to the feature dimension in linear function approximation.
In this paper, 
we propose a variant of $\atd$, called $\attd$,
which has $\fO(K)$ computation per step, and the memory cost is $\fO(K\ln^2 t)$.
Here, $t$ refers to the time step.
Admittedly, $\ln^2 t$ diverges to $\infty$ eventually.
However, we argue that this memory cost is negligible in any empirical implementations.
For example,
our universe has an age of around 14 billion years.
Consider a modern 3 GHz CPU.
Suppose that an RL agent runs 1 step every CPU clock and starts from the very beginning of our universe.
Then until now it roughly has run $T = 14 \times 10^9 \times 3.1536 \times 10^7 \times 3 \times 10^9 \approx 10^{27}$ steps.
We then have $\ln^2 T \approx 4000$.
In light of this,
we claim that $\attd$ does not have any real drawback in terms of memory compared with GTD.
$\attd$, however,
has only one set of parameters and one learning rate.
By contrast,
GTD has two sets of parameters and two learning rates,
making it hard to tune in practice \citep{sutton2009convergent}.
We prove that $\attd$ eventually converges to the same solution as GTD.
We also demonstrate that if an additional projection operator is used,
$\attd$ also enjoys the same convergence rate as the canonical on-policy TD.
The assumptions in our analysis are all standard.

\section{Background}
In this paper,
all vectors are columns.
We use $\norm{\cdot}$ to denote the $\ell_2$ vector and matrix norm. 
We use functions and vectors interchangeably when it does not confuse.
For example, if $f$ is a function from $\fS$ to $\R$,
we also use $f$ to denote a vector in $\R^\ns$,
whose $s$-th element is $f(s)$.

We consider an infinite horizon Markov Decision Process (MDP, \citet{puterman2014markov}) with a finite state space $\fS$,
a finite action space $\fA$,
a reward function $r: \fS \times \fA \to \R$,
a transition function $p: \fS \times \fS \times \fA \to [0, 1]$,
and a discount factor $\gamma \in [0, 1)$.
At time step 0,
a state $S_0$ is sampled from some initial distribution $p_0$.
At time step $t$,
an agent at a state $S_t$ takes an action $A_t \sim \pi(\cdot | S_t)$.
Here $\pi: \fA \times \fS \to [0, 1]$ is the policy being followed.
A reward $R_{t+1} \doteq r(S_t, A_t)$ is then emitted, and a successor state $S_{t+1}$ is sampled from $p(\cdot | S_t, A_t)$.

The return at time step $t$ is defined as
    $G_t \doteq \sum_{i=0}^{\infty} \gamma^i R_{t+i+1}$,
which allows us to define the state value function as
    $v_{\pi}(s) \doteq \E_{\pi, p}\left[G_t | S_t = s\right]$.
The value function $v_\pi$ is the unique fixed point of the Bellman operator
    $\bop_\pi v \doteq r_\pi + \gamma P_\pi v$.
Here $r_\pi \in \R^{\ns}$ is the reward vector induced by $\pi$, defined as $r_\pi(s) \doteq \sum_a \pi(a|s) r(s, a)$.
$P_\pi \in \R^{\ns \times \ns}$ is the transition matrix induced by $\pi$,
i.e.,
$P_\pi(s, s') \doteq \sum_a \pi(a|s)p(s'|s, a)$.

Estimating $v_\pi$ is one of the most important tasks in RL and is called policy evaluation. 
Linear function approximation is commonly used for policy evaluation \citep{sutton1988learning}.
Consider a feature function $x: \fS \to \R^K$ that maps a state $s$ to a $K$-dimensional feature $x(s)$.
We then use $x(s)^\top w$ to approximate $v_\pi(s)$.
Here $w \in \R^K$ is the learnable weight.
Let $X \in \R^{\ns \times K}$ be the feature matrix,
whose $s$-th row is $x(s)^\top$.
The goal is then to adapt $w$ such that $Xw \approx v_\pi$.
Linear TD \citep{sutton1988learning} updates $w$ iteratively as 
\begin{align}
    \label{eq linear td}
    w_{t+1} \doteq w_t + \alpha_t \left(R_{t+1} + \gamma x_{t+1}^\top w_t - x_t^\top w_t\right)x_t.
\end{align}
Here, we use $x_t \doteq x(S_t)$ as shorthand.
Under mild conditions,
the iterates $\qty{w_t}$ in~\eqref{eq linear td} converge almost surely \citep{tsitsiklis1997analysis}.

It is commonly the case that we want to estimate $v_\pi$ without actually sampling actions from $\pi$
due to various concerns, e.g., 
safety \citep{dulac2019challenges}, data efficiency \citep{lin1992self,sutton2011horde}.
Off-policy learning makes this possible.
In particular, 
instead of sampling $A_t$ according $\pi(\cdot | S_t)$,
off-policy method samples $A_t$ according to another policy $\mu$.
Here,
the policy $\pi$ is called the target policy and the policy $\mu$ is called the behavior policy.
\emph{For the rest of the paper,
we always consider the off-policy setting},
i.e.,
\begin{align}
    \label{eq data stream}
    A_t \sim \mu(\cdot | S_t), R_{t+1} = r(S_t, A_t), S_{t+1} \sim p(\cdot | S_t, A_t).
\end{align}
Since the behavior policy $\mu$ is different from the target policy $\pi$,
importance sampling ratio is used to account for this discrepancy,
which is defined as 
    $\rho(s, a) \doteq \frac{\pi(a|s)}{\mu(a|s)}$.
In particular, we use as shorthand $\rho_t \doteq \rho(S_t, A_t)$.
Off-policy linear TD then updates $w$ iteratively as
\begin{align}
    \label{eq linear off-policy td}
    w_{t+1} \doteq w_t + \alpha_t \rho_t \left(R_{t+1} + \gamma x_{t+1}^\top w_t - x_t^\top w_t\right)x_t.
\end{align}
It is well-known \citep{sutton2009convergent} that if off-policy linear TD converged,
it would converge to a $w_*$ satisfying
\begin{align}
    \label{eq td fixed point}
    Aw_* + b = 0,
\end{align}
where
\begin{align}
    \label{eq A matrix}
    A \doteq&  X^\top D_\mu(\gamma P_\pi - I)X, \,
    b \doteq  X^\top D_\mu r_\pi.
\end{align}
Here, $d_\mu$ is the stationary distribution of the Markov chain induced by the behavior policy $\mu$,
and $D_\mu$ is a diagonal matrix with the diagonal being $d_\mu$.
Unfortunately,
the possible divergence of off-policy linear TD in~\eqref{eq linear off-policy td} is well documented \citep{baird1995residual,sutton2016emphatic,sutton2018reinforcement}.
This divergence exercises the deadly triad.


Instead of using off-policy linear TD in~\eqref{eq linear off-policy td} to find $w_*$,
one natural approach for policy evaluation in the off-policy setting is then to solve 
    $Aw+b = 0$
directly,
probably with stochastic gradient descent on the objective
    $L(w) \doteq \norm{Aw + b}^2$.
The on-policy version of this objective (i.e., with $\mu = \pi$) is first introduced in \citet{yao2008preconditioned} to derive preconditioned TD.
The off-policy version considered in this paper
is first used by \citet{sutton2009convergent} to derive GTD, and this objective is called \emph{the norm of the expected TD update} (NEU) in \citet{sutton2009fast}.
The gradient of $L(w)$ can be easily computed as
    $\nabla L(w) = 2A^\top (A w + b)$.
One can, therefore, update $w$ as 
\begin{align}
    \label{eq gradient of obj}
    w_{t+1} \doteq w_t - \alpha_t A^\top (A w_t + b).
\end{align}
Since we do not know $A$ and $b$,
we need to estimate $A^\top (A w_t + b)$ with samples.
The idea of $\atd$ in \citet{sutton2009convergent} is to estimate $A^\top$ as
\begin{align}
    \textstyle A^\top \approx \frac{1}{t+1} \sum_{i=0}^{t} \rho_i (\gamma x_{i+1} - x_i) x_i^\top
\end{align}
and to estimate $Aw_t + b$ as
\begin{align}
    Aw_t + b \approx \rho_t \qty(R_{t+1} + \gamma x_{t+1}^\top w_t - x_t^\top w_t) x_t.
\end{align}
As said in \citet{sutton2009convergent},
$\atd$ is ``essentially conventional TD(0) prefixed by an estimate of the matrix $A^\top$''.
Apparently, computing and store this estimate of $A^\top$ requires $\fO(K^2)$ computation and memory per step, 
if we use a moving average implementation.
And it is unclear whether this $\atd$ is convergent.
Having deemed this $\atd$ inferior,
\citet{sutton2009convergent} rewrite the gradient as
\begin{align}
    \nabla L(w) = A^\top (A w + b) = A^\top X^\top D_\mu \left(\bop_\pi(Xw) - Xw\right)
\end{align}
and use a secondary weight $\nu \in \R^K$ to approximate $X^\top D_\mu \left(\bop_\pi(Xw) - Xw\right)$,
yielding the following well-known GTD algorithm
\begin{align}
    \delta_t \doteq& R_{t+1} + \gamma x_{t+1}^\top w_t - x_t^\top w_t, \\
    \nu_{t+1} \doteq& \nu_t + \alpha_t \left(\rho_t \delta_t x_t - \nu_t \right), \\
    w_{t+1} \doteq& w_t + \alpha_t \rho_t (x_t - \gamma x_{t+1}) x^\top_t \nu_t.
    \label{eq GTD} \tag{GTD}
\end{align}
The convergence and finite sample analysis of GTD is well established \citep{sutton2009convergent,sutton2009fast,liu2015finite,wang2017finite,yu2017convergence}.

\section{$\attd$: Revisiting the Design Choice of $\atd$}
In this paper,
we refine the idea of 
$\atd$ via estimating $A^\top$ with a single sample at time $t + f(t)$ as 
\begin{align}
    A^\top \approx \rho_{t+f(t)} \left(x_{t+f(t)} - \gamma x_{t+f(t) + 1}\right) x^\top_{t+f(t)},
\end{align}
where $f: \mathbb{N} \to \mathbb{N}$ is an increasing \emph{gap function}.
This yields the following novel algorithm:
\begin{align}
    \delta_t \doteq& R_{t+1} + \gamma x_{t+1}^\top w_t - x_t^\top w_t, \\
    w_{t+1} \doteq& w_t + \alpha_t \rho_{t+f(t)} \left(x_{t+f(t)} - \gamma x_{t+f(t) + 1}\right) x^\top_{t+f(t)} \rho_t \delta_t x_t \tag{$\attd$}.
    \label{eq direct gtd}
\end{align}
We call it $\attd$ to highlight that it uses a single sample to estimate $A^\top$.
In~\eqref{eq direct gtd}, 
the term $\rho_{t+f(t)} \rho_t \left(R_{t+1} + \gamma x_{t+1}^\top w_t - x_t^\top w_t\right)$ is a scalar,
the computational complexity of which is only $\fO\left(K\right)$.
If we compute the remaining term $\left(x_{t + f(t)} - \gamma x_{t + f(t) + 1}\right) x^\top_{t + f(t)} x_t$ from right to left,
the computational complexity is still $\fO\left(K\right)$.
In other words,
the computational complexity of~\eqref{eq direct gtd} is the same as~\eqref{eq GTD}.
The price we pay here is that we cannot start~\eqref{eq direct gtd} until the $\left(f(0)+1\right)$-th step, and we need to maintain a memory storing
\begin{align}
    \label{eq memory}
    x_t, \rho_t, x_{t+1}, \rho_{t+1}, \dots x_{f(t)}, \rho_{f(t)}, x_{f(t)+1}.
\end{align}
The size of this memory is $\fO\left(f(t)\right)$.
We will soon prove that the memory can be as small as $\Omega(\ln^2(t))$.
We argue that this memory overhead is negligible in any empirical implementations.
The gain is that we now do not need an additional weight vector,
making the algorithm easy to use.
We will have a few assumptions on the gap function $f$ shortly to facilitate the theoretical analysis.
But one example could simply be 
\begin{align}
f(t) = \floor{\ln^2(t+1)},
\end{align}
where $\floor{\cdot}$ is the floor function.
In other words, the choice of the gap function is simple and does not depend on any unknown problem structure.
To understand how this gap function works,
we consider a case where $f(t) = 0 \forall t$.
Then we are essentially estimating $A^\top$ and $Aw_t + b$ with the same sample $(x_t, R_{t+1}, x_{t+1})$.
We will then for sure run into the well-known double sampling issue\footnote{It is well-known that in general $\E[XY] \neq \E[X] \E[Y]$.}.
By using the gap function,
we use the sample at time $t+f(t)$ to estimate $A^\top$ and the sample at time $t$ to estimate $Aw_t + b$.
Despite that those two samples are still correlated due to the Markovian nature of the data stream~\eqref{eq data stream},
the increasing $f(t)$ gradually reduces the correlation.
The theoretical analysis in the following two sections confirms that such a simple gap function is enough to guarantee the desired convergence to $w_*$ with a desired convergence rate. 

We do note that throughout the paper we consider the canonical RL setting where only Markovian samples are available.
If instead i.i.d. samples are available,
addressing the aforementioned doubling sampling issue then becomes more straightforward -- one can simply use two independent samples to estimate $A^\top$ and $Aw_t + b$.
$\atd$ with i.i.d. samples are thoroughly investigated in \citet{yao2023new} and we refer the reader to \citet{yao2023new} for more details.

\section{Asymptotic Convergence Analysis of $\attd$}
\label{sec as convergence}
In this section,
we provide an asymptotic convergence analysis of~\eqref{eq direct gtd}.
The major technical challenge lies in the increasing gap function.
If $f(t)$ was a constant function, say $f(t) \equiv t_0$,
then one could start analyzing~\eqref{eq direct gtd} via constructing an augmented Markov chain with states 
    $Y_t \doteq \qty{S_t, A_t, \dots, S_{t+t_0}, A_{t+t_0}, S_{t+t_0+1}}$,
evolving in a finite space $(\fS \times \fA)^{t_0+1} \times \fS$.
Suppose the origin Markov chain $\qty{S_t}$ is ergodic,
this new chain $\qty{Y_t}$ would also be ergodic,
matching the ergodicity assumption of 
classical convergence results (e.g., Proposition 4.7 of \citet{bertsekas1996neuro}).\footnote{We note that when $f(t) \equiv t_0$, another key assumption does not hold. 
To see this, let $\hat A_t \doteq \rho_t x_t(\gamma x_{t+1} - x_t)^\top$. If $f(t) \equiv t_0$, the expected updates would be governed by $-\E\qty[\hat A_{t+t_0}^\top \hat A_t]$. 
However, this expectation is not equal to $-A^\top A$ due to the correlation between $\hat A_{t+t_0}$ and $\hat A_t$. 
So this expectation is unlikely to be negative definite.
But canonical results typically require this expectation to be negative define.
To summarize, if $f(t) \equiv t_0$,
the ergodicity assumption on the Markov chain in the canonical results can be fulfilled but the negative definiteness of the expected update matrix cannot be fulfilled.
}
When $f(t)$ is increasing,
the augmented chain, however, is now 
    $Y_t \doteq \qty{S_t, A_t, \dots, S_{t+f(t)}, A_{t+f(t)}, S_{t+f(t)+1}}$
which evolves in an infinite space
    $\bigcup_{i=1}^\infty (\fS \times \fA)^i \times \fS$.
Even if the original chain $\qty{S_t}$ is ergodic,
the new chain $\qty{Y_t}$ still behaves poorly in that it never visits the same augmented state twice.
This rules out the possibility of applying most,
if not all,
existing convergence results in the stochastic approximation community (e.g., \citet{DBLP:books/sp/BenvenisteMP90,kushner2003stochastic,borkar2009stochastic,liu2025ode}).
To proceed,
we instead use the skeleton iterates technique introduced by \citet{qian2024sureconvergenceratesconcentration}.
The key idea of this skeleton iterates technique is to divide the non-negative real axis into intervals of length $\qty{T_m}$ and examine the updates interval by interval.
Importantly,
we will require this $\qty{T_m}$ to diminish,
in a rate coordinated with the gap function $f(t)$ and the learning rate $\alpha_t$.

Besides the skeleton iterates technique, another important ingredient is the mixing of joint state distributions in Markov chains.
Consider a general Markov chain $\qty{Y_t}$.
Assume the chain is ergodic and let $d_\fY$ denote its invariant distribution.
Then the convergence theorem (see, e.g., \citet{levin2017markov}) yields
    $\lim_{t\to\infty}\Pr(Y_t = y) = d_\fY(y)$.
This convergence is uniform in $y$ and is geometrically fast.
Exploiting this convergence,
we are able to prove the convergence of joint state distributions, i.e.,
\begin{align}
    \lim_{t\to\infty}\Pr(Y_t = y, Y_{t+f(t)} = y') = d_\fY(y) d_\fY(y').
\end{align}
Intuitively,
this means the dependence between $Y_t$ and $Y_{t+f(t)}$ diminishes as $t$ goes to infinity (cf. Lemma 7.1 in \citet{vempala2005geometric}).
In~\eqref{eq direct gtd},
this means
the bias resulting from 
the correlation of the two samples at time $t+f(t)$ and time $t$ diminishes gradually.
Having introduced the two main technical ingredients in our analysis,
we are now ready to formally describe our main results.
We start with (standard) assumptions we make.

\begin{assumption}
    \label{assu chain}
    The Markov chain induced by the behavior policy $\mu$ is finite, irreducible, and aperiodic.
    And $\mu$ covers $\pi$, i.e., $\forall (s, a), \pi(a|s) > 0 \implies \mu(a|s) > 0$.
\end{assumption}
\begin{assumption}
    \label{assu feature}
    The feature matrix $X$ has a full column rank.
    The matrix $A$ defined in~\eqref{eq A matrix} is nonsingular.
\end{assumption}
Assumptions~\ref{assu chain} and~\ref{assu feature} are standard in the analysis of linear TD methods (see, e.g., \citet{tsitsiklis1997analysis,wang2017finite}).

\begin{assumption}
    \label{assu lr}
    The learning rates $\qty{\alpha_t}$ have the form of
        $\alpha_t = \frac{C_\alpha}{(t+1)^{\nu}}$,
    for some $\nu \in (\frac{2}{3}, 1]$.
\end{assumption}
Assumption~\ref{assu lr} considers learning rates of a specific form.
This is mostly for ease of presentation.

\begin{assumption}
    \label{assu gap}
    The gap function $f(t): \mathbb{N} \to \mathbb{N}$ is increasing and satisfies $\forall \chi \in [0, 1)$, \\
        $\textstyle \sum_{t=0}^\infty \chi^{f(t)} < \infty$.
    Moreover, there exist constants $\tau \in (0, \frac{3 \nu - 2}{2 \nu})$ and $C_\tau > 0$ such that $\forall t$,
      $f(t) \le C_\tau \alpha_t^{-\tau}$.  
\end{assumption}
Assumption~\ref{assu gap} is the most ``unnatural'' assumption we make and prescribes how the gap function should be chosen.
Intuitively,
those conditions prevent the gap function from growing too fast.
Despite seemingly complicated,
Lemma \ref{lem sufficient condition for gap assu} in the appendix confirms that simply setting
\begin{align}
    \label{eq gap function example}
    f(t) = \floor{h(t) \ln(t+1)}
\end{align}
with any non-negative increasing function $h(t)$ converging to $\infty$ as $t \to \infty$
fulfills the first condition of Assumption~\ref{assu gap}.
Here $\floor{x}$ is the floor function denoting the largest integer smaller than $x$.
A concrete example satisfying Assumption~\ref{assu lr} and~\ref{assu gap} is
\begin{align}
    \label{eq lr example}
    \nu =& 1, h(t) = \ln(t+1), f(t) = \floor{\ln^2(t+1)}, \tau=0.1.
\end{align}
We are now ready to present our main results.
\begin{theorem}
    \label{thm asym}
    Let Assumptions~\ref{assu chain},~\ref{assu feature},~\ref{assu lr}, \&~\ref{assu gap} hold.
    Then the iterates $\qty{w_t}$ generated by~\eqref{eq direct gtd} satisfies
    \begin{align}
        \lim_{t\to\infty} w_t = w_* \qq{a.s.,}
    \end{align}
    where $w_*$ is the TD fixed point defined in~\eqref{eq td fixed point}.
\end{theorem}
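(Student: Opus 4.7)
The plan is to view \eqref{eq direct gtd} as the stochastic approximation scheme $w_{t+1} = w_t + \alpha_t H_t(w_t)$, where $H_t(w)$ is the right-hand side of \eqref{eq direct gtd} with $w_t$ replaced by $w$, and whose intended mean field is $\bar H(w) = -A^\top(Aw + b)$. Under Assumption~\ref{assu feature}, $A^\top A$ is positive definite, so $w_*$ (defined by $Aw_* + b = 0$) is the unique globally exponentially stable equilibrium of the associated ODE $\dot w = \bar H(w)$. The goal is therefore to show that the iterates track this ODE almost surely.

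I would decompose $H_t(w) = \bar H(w) + \xi_t(w) + \eta_t(w)$, where $\xi_t(w) \doteq \E[H_t(w)] - \bar H(w)$ is a deterministic bias and $\eta_t(w) \doteq H_t(w) - \E[H_t(w)]$ has zero mean. The new ingredient, already flagged by the authors, is the joint mixing estimate $\Pr(Y_t = y, Y_{t+f(t)} = y') \to d_\fY(y) d_\fY(y')$; combined with the geometric mixing of $\{Y_t\}$ guaranteed by Assumption~\ref{assu chain} and the summability $\sum_t \chi^{f(t)} < \infty$ from Assumption~\ref{assu gap}, this should yield a quantitative bound of the form $\|\xi_t(w)\| \le c\, \chi^{f(t)} (1 + \|w\|)$, which is summable against any polynomially decaying step size.

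The principal obstacle is that the augmented chain $\{Y_t\}$ lives in the expanding space $\bigcup_i (\fS \times \fA)^i \times \fS$ and visits every augmented state at most once, so classical ODE-method or Poisson-equation results (e.g.\ \citet{bertsekas1996neuro,borkar2009stochastic}) cannot be applied off the shelf. I would bypass this by adopting the skeleton-iterate technique of \citet{qian2024sureconvergenceratesconcentration}: partition time into blocks $[T_m, T_{m+1})$ whose widths shrink compatibly with $\alpha_t$, $f(t)$, and the mixing rate, and analyze the telescoped update $w_{T_{m+1}} - w_{T_m} = \sum_{t=T_m}^{T_{m+1}-1} \alpha_t H_t(w_t)$. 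Within each block I would establish that (i) the iterates drift by only $o(1)$, so $w_t$ may be replaced by $w_{T_m}$ up to a controllable error via local Lipschitzness of $H_t$; (ii) the expected drift is close to $\bigl(\sum_t \alpha_t\bigr)\bar H(w_{T_m})$ thanks to the $\xi_t$-bound above; and (iii) the accumulated mean-zero noise is small almost surely by a maximal-inequality-plus-Borel--Cantelli argument. The constraint $\tau \in (0, (3\nu - 2)/(2\nu))$ in Assumption~\ref{assu gap} is precisely what balances these three block-wise error terms against one another.

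Combining (i)--(iii) yields a discrete-time comparison between the skeleton $\{w_{T_m}\}$ and a noisy Euler discretization of $\dot w = \bar H(w)$ whose errors vanish, after which a Lyapunov argument with $V(w) = \|w - w_*\|^2$ -- strictly decreasing along the ODE since $-A^\top A$ is negative definite -- delivers $w_{T_m} \to w_*$ almost surely, and the intra-block $o(1)$ oscillation upgrades this to convergence of the full sequence. The a priori boundedness of $\{w_t\}$ required to linearize the Lyapunov analysis should follow from the same decomposition using the linearity of $\bar H$, the finiteness of $\fS \times \fA$, and the boundedness of $\rho_t$ and $x_t$ under Assumption~\ref{assu chain}. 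I expect step (iii) -- almost-sure control of the block-wise noise in the absence of any stationary distribution for $\{Y_t\}$ -- to be the central technical difficulty, and it is precisely there that the joint mixing estimate and the skeleton block widths must be co-tuned.
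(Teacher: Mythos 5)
Your proposal follows essentially the same route as the paper: skeleton iterates \`a la \citet{qian2024sureconvergenceratesconcentration} with block widths co-tuned to $\alpha_t$ and $f(t)$, a block-wise decomposition into mean-field drift, mixing-induced bias, mean-zero noise, and intra-block movement (the paper's $g_{1,m},g_{2,m},g_{3,m},g_{4,m}$), and the joint mixing estimate to make the bias summable. One point where you should be careful: you plan to control the block-wise noise by a maximal inequality plus Borel--Cantelli and to establish a priori boundedness of $\qty{w_t}$ as a separate preliminary step ``from the linearity of $\bar H$ and boundedness of $\rho_t, x_t$'' --- the latter does not follow from those facts alone (off-policy linear TD itself has bounded coefficients yet diverges), and proving a.s.\ boundedness in isolation is essentially as hard as the convergence itself. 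The paper avoids this by bounding every error term \emph{relative} to $\norm{q_m}+1$ and closing the argument with the supermartingale convergence theorem applied to $\norm{q_m}^2$, which delivers stability, noise control, and convergence in a single stroke; your Lyapunov step with $V(w)=\norm{w-w_*}^2$ becomes exactly this once you drop the separate boundedness step and keep the errors in relative form.
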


\begin{proof}
    Following \citet{qian2024sureconvergenceratesconcentration},
we define a sequence $\qty{T_m}_{m=0,1,\dots}$ as
\begin{align}
    \label{eq def big tm}
    \textstyle T_m = \frac{16 \max(C_\alpha,1)}{(\eta + 1)(m+1)^\eta},
\end{align}
where $C_\alpha$ is defined in Assumption~\ref{assu lr} and $\eta$ is some constant such that 
\begin{align}
    \label{eq def eta}
    \textstyle \frac{1}{2(1-\tau)} < \eta < \frac{\nu}{2-\nu}.
\end{align}
Here $\nu$ and $\tau$ are defined in Assumption~\ref{assu lr} and~\ref{assu gap} respectively.
Notably,
despite that we follow the skeleton iterates technique in \citet{qian2024sureconvergenceratesconcentration},
our analysis is more challenging than \citet{qian2024sureconvergenceratesconcentration} in that they only need to coordinate $\qty{T_m}$ with the learning rate $\alpha_t$ 
but we need to coordinate $\qty{T_m}$ with both the learning rate $\alpha_t$ and the gap function $f(t)$.
As a result,
\citet{qian2024sureconvergenceratesconcentration} only require $\eta \in (\frac{1}{2}, \frac{\nu}{2 - \nu})$ but we further require $\eta > \frac{1}{2(1 - \tau)}$,
which significantly complicates the analysis.

We now follow \citet{qian2024sureconvergenceratesconcentration} and divide the real line into intervals with approximate length $\qty{T_m}$.
To this end,
we define a sequence $\qty{t_m}$ as $t_0 \doteq 0$,
\begin{align}
    \label{eq def tm}
    \textstyle t_{m+1} \doteq \min\qty{k | \sum_{t=t_m}^{k-1} \alpha_t \geq T_m}, \, m=0,1,\dots
\end{align}
For simplicity,
define
\begin{align}
   \textstyle \bar \alpha_m \doteq \sum_{t=t_m}^{t_{m+1}-1} \alpha_t, \, m=0,1,\dots
\end{align}
Now, the real line has been divided into intervals of lengths $\qty{\bar \alpha_m}_{m=0,1,\dots}$.
The following properties of this segmentation will be used repeatedly.
\begin{lemma}
    \label{lem lr bounds}
    For all $m \ge 0$ and $t \ge t_m$, we have $\alpha_t \le T_m^2$.
\end{lemma}
The proof is provided in Section~\ref{sec proof lem lr bounds}.

\begin{lemma}
    \label{lem lr bounds 2}
    For all $m \ge 0$, we have $\bar \alpha_m \le 2 T_m$.
\end{lemma}
The proof is provided in Section~\ref{sec proof lem lr bounds 2}.
We do note that the above two lemmas are analogous to Lemmas 1 \& 2 of \citet{qian2024sureconvergenceratesconcentration} but the analysis is more challenging due to the requirement of $\eta > \frac{1}{2(1-\tau)}$.
    Following \citet{qian2024sureconvergenceratesconcentration}, we now investigate the iterates $\qty{w_t}$ interval by interval.
    Telescoping~\eqref{eq direct gtd} yields
    \begin{align}
        \label{eq w telescope}
        \textstyle w_{t_{m+1}} = w_{t_m} + \sum_{t=t_m}^{t_{m+1}-1} \alpha_t(-\hat{A}_{t+f(t)}^\top \hat{A}_t w_t - \hat{A}_{t+f(t)}^\top \hat b_t),
    \end{align}
    where we have used shorthand $\hat A_t \doteq \rho_t x_t(\gamma x_{t+1} - x_t)^\top$ and $\hat b_t \doteq \rho_t R_{t+1} x_t$.
    For ease of presentation,
    we define for all $m > 0$,
    \begin{align}
        \label{eq q m}
        q_m = w_{t_m} + A^{-1} b.
    \end{align}
    Then, our goal is to show that $\qty{q_m}$ converges to 0.
    Plugging in~\eqref{eq q m} into~\eqref{eq w telescope} yields
\begin{align}
    q_{m+1} &= \textstyle q_m + \sum_{t=t_m}^{k_{m+1}-1} \alpha_t(-\hat{A}_{t+f(t)}^\top \hat{A}_t w_t -\hat{A}_{t+f(t)}^\top \hat{b}_t)\\
    &= \textstyle q_m + \sum_{t=t_m}^{k_{m+1}-1} \alpha_t\left[-\hat{A}_{t+f(t)}^\top \hat{A}_t \left(w_t+A^{-1}b\right) - \hat{A}_{t+f(t)}^\top \left(\hat{b}_t - \hat{A}_t A^{-1}b\right)\right]\\
    &= q_m + g_{1,m} + g_{2,m} + g_{3,m} + g_{4,m},
\end{align}
where
\begin{align}
    g_{1,m} =& \textstyle\sum_{t=t_m}^{t_{m+1}-1} \alpha_t (-A^\top A q_m) = - \bar{\alpha}_m A^\top A q_m, \\
    g_{2,m} =& \textstyle\sum_{t=t_m}^{t_{m+1}-1} \alpha_t \left(A^\top A - \E\left[\hat{A}_{t+f(t)}^\top \hat{A}_t | \fF_{t_m + f(t_m)} \right]\right) q_m \\
    &\textstyle- \sum_{t=t_m}^{t_{m+1}-1} \alpha_t \E\left[\hat{A}_{t+f(t)}^\top \left(\hat{b}_t - \hat{A}_t A^{-1}b\right) | \fF_{t_m + f(t_m)} \right], \\
    g_{3,m} =& \textstyle\sum_{t=t_m}^{t_{m+1}-1} \alpha_t \left(\E\left[\hat{A}_{t+f(t)}^\top \hat{A}_t | \fF_{t_m + f(t_m)} \right] - \hat{A}_{t+f(t)}^\top \hat{A}_t\right) q_m \\
    &\textstyle+ \sum_{t=t_m}^{t_{m+1}-1} \alpha_t \left(\E\left[\hat{A}_{t+f(t)}^\top \left(\hat{b}_t - \hat{A}_t A^{-1}b\right) | \fF_{t_m + f(t_m)} \right] -  \hat{A}_{t+f(t)}^\top \left(\hat{b}_t - \hat{A}_t A^{-1}b\right) \right), \\
    g_{4,m} =&\textstyle \sum_{t=t_m}^{t_{m+1}-1} \alpha_t \hat{A}_{t+f(t)}^\top \hat{A}_t  \left[q_m - \left(w_t + A^{-1} b\right)\right].
\end{align}
Here $\fF_t$ denotes the $\sigma$-algebra until time $t$,
i.e.,
    $\fF_t \doteq \sigma\qty(w_0, S_0, A_0, \dots, S_{t-1}, A_{t-1}, S_t)$.
We use the following lemmas to bound each term above.
In \citet{qian2024sureconvergenceratesconcentration},
they do not have terms like $f(t)$ (cf. $f(t) = 0$).
As a result,
their $w_{t+1}$ is adapted to $\fF_t$.
But in our analysis,
due to the dependence on $\hat{A}_{t+f(t)}$,
$w_{t+1}$ is \emph{not} adapted to $\fF_t$ and is only adapted to $\fF_{t+f(t)}$.
This greatly complicates the analysis, and we will repeatedly use Lemma~\ref{lem stochastic estimates} to address this challenge.
Moreover, Assumption~\ref{assu feature} implies that the matrix $A^\top A$ is positive definite, i.e.,
there exists a constant $\beta > 0$ such that for all $w$,
\begin{align}
    \label{eq pd beta}
    w^\top A^\top A w \geq \beta \norm{w}^2.
\end{align}
This $\beta$ plays a key role in the following bounds.
The finiteness of the MDP and Assumptions~\ref{assu chain} \&~\ref{assu feature} ensure the existence of a constant $H < \infty$ such that
\begin{align}
    \textstyle \sup_t\max\qty{\norm{\hat A_t}, \norm{\hat b_t}, \norm{A}, \norm{b}, \norm{\hat A_t A^{-1}b}} \leq H.
\end{align}

\begin{lemma}
    \label{lem qm wt distance}
    If $e^{2 T_m H^2} \le 2$, then for all $t$ such that $t_m \le t < t_{m+1}$, we have
    \begin{align}
        \norm{ q_m - \left(w_t + A^{-1} b\right) } \le 8 T_m H^2 (\| q_m \| +1).
    \end{align}
\end{lemma}
The proof is provided in Section~\ref{sec proof lem qm wt distance}.

\begin{lemma}\label{lem gm1 bound}
    If $2 H^4 T_m \le \beta$, then $\| q_m + g_{1,m}\|^2 \le (1 - \beta T_m) \|q_m\|^2$.
\end{lemma}
The proof is provided in Section~\ref{sec proof lem gm1 bound}.

\begin{lemma}\label{lem gm2 bound}
    If $T_m \le 1$, then
    \begin{align}
        \textstyle \norm{g_{2,m}} \le C_M T_m^{2(1-\tau)} (H+1) \left(C_\tau + L(f, \chi) + \frac{1}{1-\chi}\right) (\norm{q_m} + 1),
    \end{align} 
    where $L(f, \chi) = \sum_{t=0}^\infty \chi^{f(t)}$, $C_\tau$ is defined in Assumption~\ref{assu gap},
    and $C_M$ is defined in Lemma~\ref{lem stochastic estimates}.
    Notably, $L(f, \cdot)$ is finite due to Assumption \ref{assu gap}.
\end{lemma}
The proof is provided in Section~\ref{sec proof lem gm2 bound}.

\begin{lemma}\label{lem gm3 bound}
    $\| g_{3,m} \| \le 8 T_m H^2 (\| q_m \| +1)$ and $\E\left[g_{3,m} | \fF_{t_m + f(t_m)}\right] = 0$.    
\end{lemma}
The proof is provided in Section~\ref{sec proof lem gm3 bound}.

\begin{lemma} \label{lem gm4 bound}
    If $e^{2 T_m H^2} \le 2$, then $\| g_{4,m} \| \le 8 T_m^2 H^4 (\| q_m \| +1)$.
\end{lemma}
The proof is provided in Section~\ref{sec proof lem gm4 bound}.
Putting all the bounds together,
the following lemma shows that 
the sequence $\qty{\norm{q_m}}_{m\ge0}$ is a supermartingale sequence.
\begin{lemma}\label{lem qm supermartingale property}
    If $T_m \le \min\left(\frac{\beta}{2 H^4}, 1, \frac{\ln(2)}{2 H^2}\right)$, 
    then there exists a scalar $D$ such that
    \begin{align}
        \textstyle \E\left[\norm{q_{m+1}}^2 | \fF_{t_m + f(t_m)}\right] \le \left(1-\beta T_m + D T_m^{2(1-\tau)}\right) \| q_m \|^2 + D T_m^{2(1-\tau)},
    \end{align}
    where $\tau$ is defined in Assumption~\ref{assu gap}.
    In particular, when $D T_m^{2(1-\tau)} \le \frac{1}{2} \beta T_m$, we have
    \begin{align}
        \label{eq qm sup}
        \textstyle \E\left[\norm{q_{m+1}}^2 | \fF_{t_m + f(t_m)}\right] \le \left(1- \frac{1}{2} \beta T_m\right) \| q_m \|^2 + D T_m^{2(1-\tau)}.
    \end{align}
\end{lemma}
The proof is provided in Section~\ref{sec proof lem qm supermartingale property}.
The supermartingale convergence theorem can then take over to show the convergence of $\qty{q_m}$.

\begin{lemma}\label{lem qm as convergence}
    $\lim_{m\to\infty}\norm{q_m} = 0$ a.s.  
\end{lemma}
The proof is provided in Section~\ref{sec proof lem qm as convergence}.
With all the established lemmas, 
we can draw our final conclusion using Lemma \ref{lem qm wt distance}. 
Since both $T_m$ and $q_m$ converges to 0 almost surely, 
the difference between $w_t + A^{-1} b$ and $q_m$ converges to 0 almost surely. 
As a result, we can conclude that $\qty{w_t + A^{-1} b}_{t=0,1,\dots}$ converges to 0, i.e., 
$\qty{w_t}$ converges to $-A^{-1}b$ almost surely,
which completes the proof.
\end{proof}

\section{Finite Sample Analysis of $\attd$}
Theorem~\ref{thm asym} proves the asymptotic convergence of~\eqref{eq direct gtd}.
The price we pay is a memory of size $\Omega\left(\ln^2 t\right)$ (cf.~\eqref{eq memory}).
If the convergence is fast, e.g., $\fO\left(\frac{1}{t}\right)$,
the memory increases reasonably slowly, and we argue that the memory overhead is acceptable.
If,
however,
the convergence is too slow,
the memory may still become too large.
To make sure that~\eqref{eq direct gtd} is a practical algorithm,
therefore,
requires performing a finite sample analysis.
To this end,
we, in this section, provide a finite sample analysis of a variant of~\eqref{eq direct gtd},
which adopts an additional projection operator
and updates $\qty{w_t}$ iteratively as
\begin{align}
    \label{eq p direct gtd}
    \delta_t \doteq& R_{t+1} + \gamma x_{t+1}^\top w_t - x_t^\top w_t \\
    w_{t+1} \doteq& \Gamma\left(w_t + \alpha_t \rho_{t+f(t)} \left(x_{t+f(t)} - \gamma x_{t+f(t) + 1}\right) x^\top_{t+f(t)}  \rho_t \delta_t x_t\right),
\end{align}
where $\Gamma: \R^K \to \R^K$ is a projection operator onto a ball of a radius $B$.
The update~\eqref{eq p direct gtd} differs from~\eqref{eq direct gtd} only in that it adopts an additional projection operator~$\Gamma$.
We, therefore, call it Projected $\attd$.
We show that the convergence rate of our Projected $\attd$ is on par with the convergence rate of the canonical on-policy linear TD in~\citet{bhandari2018finite},
up to a few logarithm terms.
Notably, adding a projection operator is a common practice in finite sample analysis of TD algorithms (see, e.g., \citet{liu2015finite,wang2017finite,bhandari2018finite,zou2019finite}) to simplify the presentation.
Techniques from~\citet{srikant2019finite} can indeed be used to perform finite sample analysis of the original~\eqref{eq direct gtd}.
This, however, complicates the presentation, and we, therefore, leave it for future work.
We now present our main results.

\begin{theorem}
    \label{thm finite}
    Let $B$ be large enough such that $\norm{w_*} \leq B$.
    Consider learning rates in the form of
        $\alpha_t = \frac{C_\alpha}{t+1}$.
    Let Assumptions~\ref{assu chain},~\ref{assu feature}, \&~\ref{assu gap} hold.
    Then there exists a constant $C_0$ such that as long as $C_\alpha \geq C_0$,
    the iterates $\qty{w_t}$ generated by Projected Direct GTD~\eqref{eq p direct gtd} satisfy
    \begin{align}
        \textstyle \E\left[\norm{w_t - w_*}^2\right] = \fO\left(\frac{f(t)\ln(t)}{t}\right).
    \end{align}
\end{theorem}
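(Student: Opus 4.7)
The plan is to adapt the Bhandari-style finite sample analysis for projected TD to two new features of~\eqref{eq p direct gtd}: the iterate $w_t$ is adapted to $\fF_{t-1+f(t-1)}$ rather than $\fF_{t-1}$, and the update direction is a product of two correlated samples separated by a gap $f(t)$, whose correlation is controlled by the joint-distribution mixing argument emphasized in Section~\ref{sec as convergence}. First, set $e_t \doteq w_t - w_*$ and exploit $\Gamma(w_*) = w_*$ together with the nonexpansiveness of $\Gamma$ to get $\norm{e_{t+1}}^2 \leq \norm{e_t + \alpha_t u_t}^2$, where $u_t \doteq -\hat{A}_{t+f(t)}^\top(\hat{A}_t w_t + \hat{b}_t)$. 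The projection radius $B$ and the uniform bound $H$ on $\hat{A}_t, \hat{b}_t$ give $\norm{u_t} \leq C_1(\norm{e_t} + 1)$, so
\[\norm{e_{t+1}}^2 \leq \norm{e_t}^2 + 2\alpha_t \indot{e_t}{u_t} + C_2 \alpha_t^2 (\norm{e_t}^2 + 1).\]
Writing $\hat{A}_t w_t + \hat{b}_t = \hat{A}_t e_t + (\hat{A}_t w_* + \hat{b}_t)$ and using $Aw_* + b = 0$, the inner product splits into a contraction piece whose target value is $-\indot{e_t}{A^\top A e_t} \leq -\beta \norm{e_t}^2$ (by~\eqref{eq pd beta}) plus two Markovian-bias pieces whose stationary expectations vanish.

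Next I would control the bias via a lookback argument. Introduce $\tau_t \doteq \ceil{c \ln t}$ with $c$ large enough that $\chi^{\tau_t} \leq 1/t$, and condition on $\fF_{t - \tau_t - f(t)}$, i.e., earlier than any sample feeding $w_t$ or $u_t$. Joint mixing in the style of Section~\ref{sec as convergence} yields
\[\norm{\E\qty[\hat{A}_{t+f(t)}^\top \hat{A}_t \mid \fF_{t-\tau_t-f(t)}] - A^\top A} \leq C_3 (\chi^{\tau_t} + \chi^{f(t)}),\]
and analogously for $\E[\hat{A}_{t+f(t)}^\top(\hat{A}_t w_* + \hat{b}_t) \mid \fF_{t-\tau_t-f(t)}]$. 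Since $e_t$ is not $\fF_{t-\tau_t-f(t)}$-measurable, I would swap it for the measurable proxy $e_{t-\tau_t}$ (an analogue of Lemma~\ref{lem qm wt distance}) and bound the discrepancy by the accumulated drift $\sum_{k=t-\tau_t-f(t)}^{t-1} \alpha_k \norm{u_k} = \fO(\alpha_t (\tau_t + f(t)))$, using $\alpha_k \asymp \alpha_t$ over a window of size $\fO(\tau_t + f(t))$.

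Assembling everything produces the recursion
\[\E\norm{e_{t+1}}^2 \leq \qty(1 - 2\beta \alpha_t + C_4 \alpha_t (\chi^{\tau_t} + \chi^{f(t)})) \E\norm{e_t}^2 + C_5 \alpha_t^2 (\tau_t + f(t) + 1)(\E\norm{e_t}^2 + 1).\]
Under Assumption~\ref{assu gap}, $\chi^{f(t)}$ is summable and in fact decays at least like $1/t$ via Lemma~\ref{lem sufficient condition for gap assu}, and $\chi^{\tau_t} \leq 1/t$ by construction, so both can be absorbed into a higher-order term. With $\alpha_t = C_\alpha/(t+1)$ and $C_\alpha$ chosen so that $2\beta C_\alpha > 1$, the recursion takes the standard form $u_{t+1} \leq (1 - c_1/t)\,u_t + c_2 f(t) \ln(t)/t^2$, which the usual comparison lemma (cf. Lemma 21 of \citet{bhandari2018finite}) solves to give $\E\norm{e_t}^2 = \fO\left(f(t)\ln(t)/t\right)$, matching the theorem.

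The main obstacle will be the second step: executing the joint-mixing bound for the two-sample estimator while rigorously accounting for the $f(t)$-step filtration shift that separates $w_t$ from any $\fF_{t-\tau_t-f(t)}$-measurable proxy. A sloppy drift estimate easily inflates the final rate by extra powers of $f(t)$ or $\ln(t)$, so the delicate part is showing that the drift cost is \emph{linear} in $\tau_t + f(t)$ and combines with the $\alpha_t^2$ factor to deliver exactly the advertised $f(t)\ln(t)/t$ rate.
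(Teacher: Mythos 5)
Your overall architecture is the paper's: nonexpansive projection, the strong-monotonicity bound $\langle w-w_*,\bar g(w)-\bar g(w_*)\rangle\le-\beta\norm{w-w_*}^2$ from~\eqref{eq pd beta}, and a drift-plus-mixing control of the Markovian bias term $\langle w_t-w_*,\,g_t(w_t)-\bar g(w_t)\rangle$ (the paper's $\Lambda_t(w_t)$, handled in Lemmas~\ref{lem finite sample bounds} and~\ref{lem expected lambda bound}); whether one then solves a per-step recursion by a comparison lemma or multiplies by $t+1$ and telescopes, as the paper does, is bookkeeping.

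There is, however, one concrete inconsistency in your second step. You condition on $\fF_{t-\tau_t-f(t)}$ and claim $e_{t-\tau_t}$ is a measurable proxy, but it is not: $w_{t-\tau_t}$ depends on $\hat A_{(t-\tau_t-1)+f(t-\tau_t-1)}$, hence on samples up to time roughly $t-\tau_t+f(t-\tau_t)$, which lies strictly in the \emph{future} of $t-\tau_t-f(t)$. So the conditional expectation $\E[\langle e_{t-\tau_t},(\hat A_{t+f(t)}^\top\hat A_t-A^\top A)e_{t-\tau_t}\rangle\mid\fF_{t-\tau_t-f(t)}]$ cannot be factored as you intend, and the joint-mixing bound does not apply as written. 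The repair is exactly what the paper builds into Lemma~\ref{lem expected lambda bound}: take the proxy $w_{t-k}$ with a lookback $k$ in the \emph{iterate index} that itself scales with $f$ (the paper uses $k=2f(T)$), and condition on $\fF_{t-k+f(t-k)}$, the $\sigma$-algebra to which $w_{t-k}$ actually is adapted; the mixing gap to the sample $\hat A_t$ is then $k-f(t-k)>f(t)$, giving a bias of order $\chi^{f(t)}$, which is summable by Assumption~\ref{assu gap}. Note also that your extra $\tau_t=\lceil c\ln t\rceil$ mixing window is unnecessary here --- the gap $f(t)$ already supplies the $\chi^{f(t)}$ decay, and the $\ln T$ in the final rate comes from the accumulated drift $\sum_t\ln\bigl(t/(t-k)\bigr)\approx k\ln T$, not from a logarithmic mixing lookback. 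With the lookback corrected to be of order $f(T)$ the drift cost remains $\fO(\alpha_t f(t))$ and your final rate $\fO(f(t)\ln(t)/t)$ goes through.
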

The proof of Theorem~\ref{thm finite} is provided in Section~\ref{sec proof of thm finite}.
Theorem~\ref{thm finite},
together with~\eqref{eq gap function example},
confirms that the convergence rate of Projected $\attd$ is reasonably fast.
In particular,
if the configuration in~\eqref{eq lr example} is used,
the convergence rate of Projected $\attd$ is on-par with the on-policy linear TD \citep{bhandari2018finite} up to logarithmic factors.

\section{Experiments}
\begin{figure}[h]
    \centering
    \includegraphics[width=\textwidth]{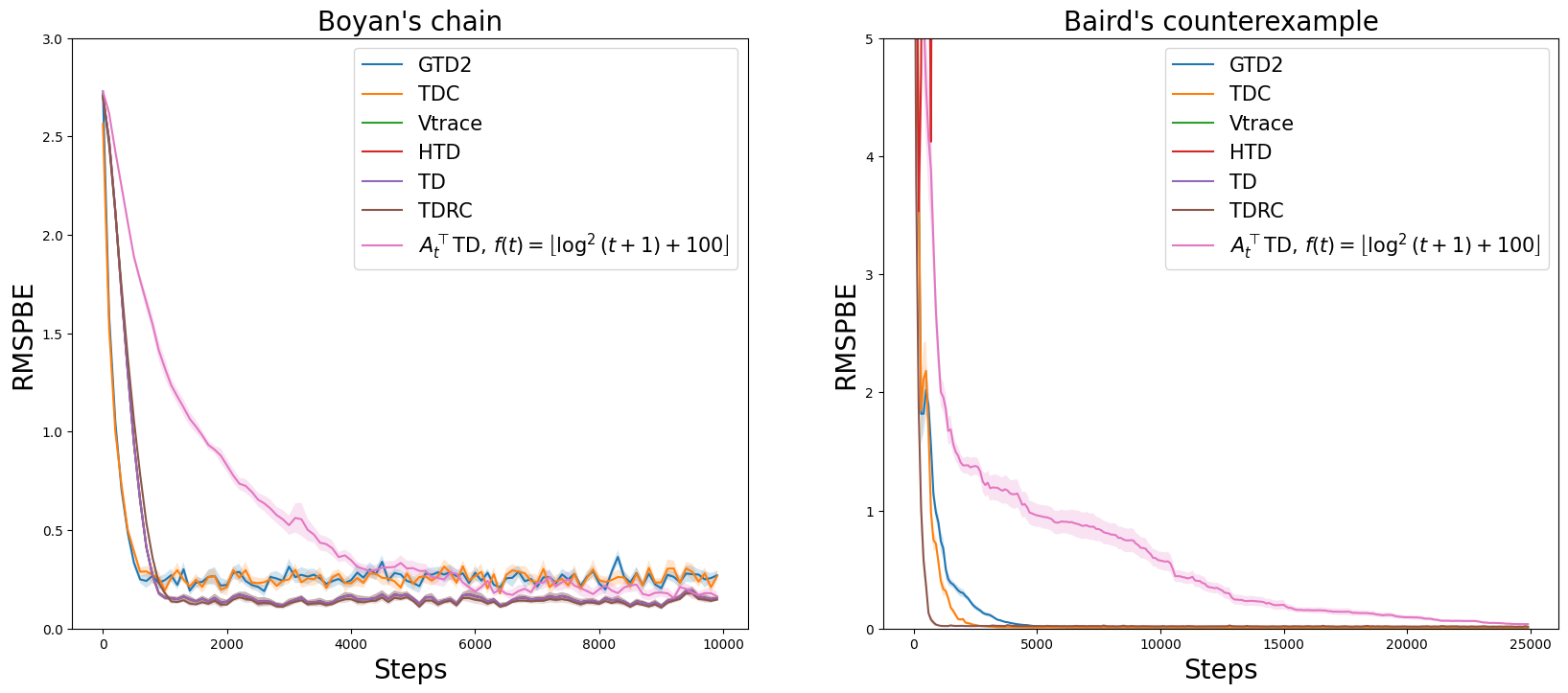}
    \caption{\label{fig baselines} Comparison of~\eqref{eq direct gtd} with previous TD algorithms. All curves are averaged over 10 random seeds with shaded regions showing standard errors. The curves of Vtrace and HTD are invisible in Boyan's chain because they reduce to TD in the on-policy setting. The curves of Vtrace, HTD, and TD are almost invisible in Baird's counterexample because they diverge very quickly.}
\end{figure}
We now empirically compare~\eqref{eq direct gtd} with a few other TD algorithms with linear function approximation,
including (naive) off-policy TD, GTD2 \citep{sutton2009fast}, TDC \citep{sutton2009fast},
Vtrace \citep{espeholt2018impala},
HTD \citep{white2016investigating},
and TDRC \citep{ghiassian2020gradient}.
Those baselines are also used in \citet{ghiassian2020gradient}.
We consider two benchmark tasks,
Boyan's chain \citep{boyan2002technical}
and Baird's counterexample \citep{baird1995residual},
which are also used in \citet{ghiassian2020gradient}.
Notably, Boyan's chain is an on-policy problem while Baird's counterexample is an off-policy problem.
Following \citet{ghiassian2020gradient},
we report the square root of the mean squared projected Bellman error (RMSPBE) at each time step.
We base our implementation on the open-sourced implementation from \citet{ghiassian2020gradient}.
So we refer the reader to \citet{ghiassian2020gradient} for details of the baselines and the tasks,
as well as the exact definition of RMSPBE.
For each algorithm,
we tune its learning rate in $\qty{2^{-20}, \dots, 2^{-1}, 1}$ and report the results with the best learning rate (in terms of minimizing RMSPBE at the last step).
As can be seen in Figure~\ref{fig baselines},
in the on-policy setting,~\eqref{eq direct gtd} outperforms both GTD2 and TDC in terms of the final performance.
In the off-policy setting,~\eqref{eq direct gtd} remains convergent and achieves the same final performance as GTD2, TDC, and TDRC.

\begin{figure}[h]
    \centering
    \includegraphics[width=\textwidth]{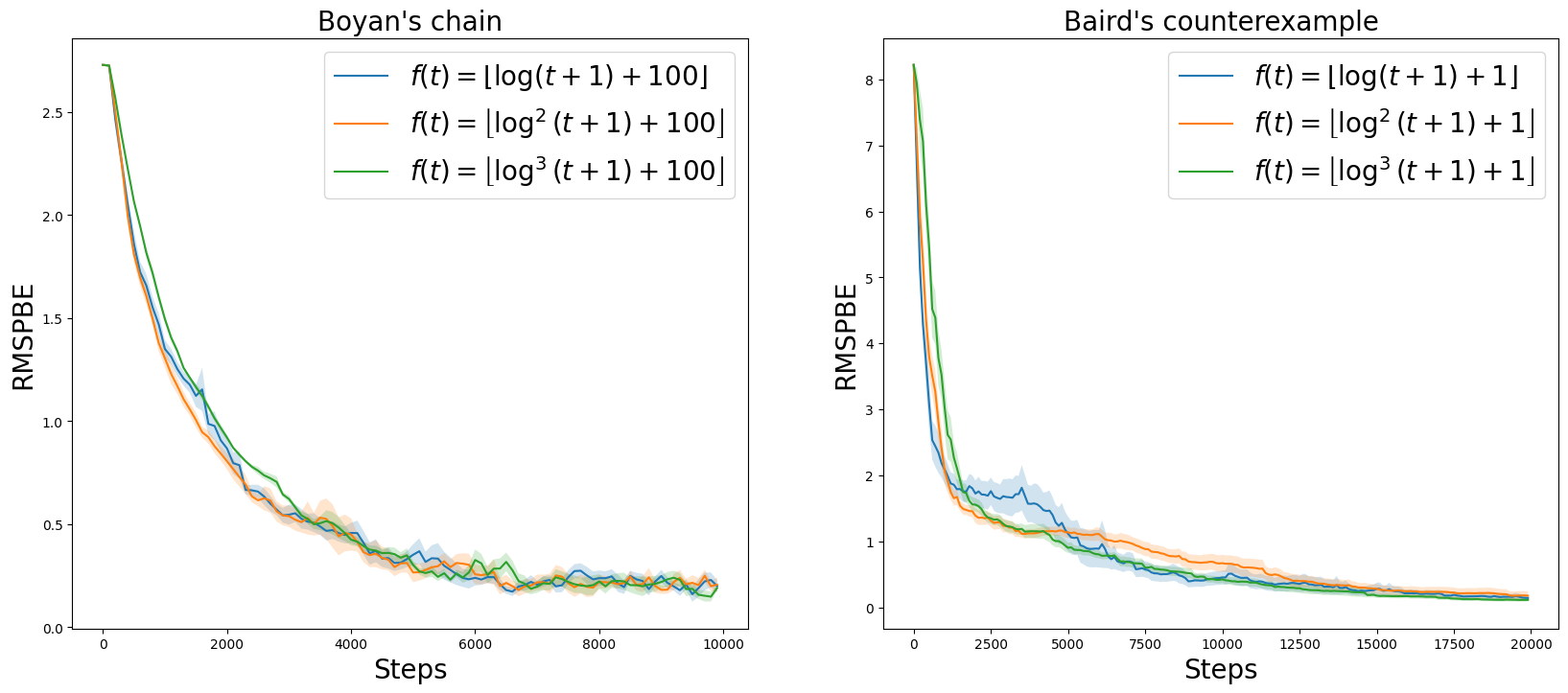}
    \caption{\label{fig gaps} \eqref{eq direct gtd} with different gap functions. All curves are averaged over 10 random seeds with shaded regions showing standard errors. 
    }
\end{figure}
We also investigate different choices of the gap function.
As can be seen in Figure~\ref{fig gaps},
even when we set the gap function to $f(t) = \fO\qty(\log t)$,
which does not respect Assumption~\ref{assu gap},
~\eqref{eq direct gtd} remains convergent and achieves a similar performance.
This suggests that Assumption~\ref{assu gap} might be overly conservative.
Pushing Assumption~\ref{assu gap} to its limit and finding the slowest gap function is an interesting and rewarding future work. 
We conjecture that if certain knowledge of the transition function of the MDP can be incorporated into the design of the gap function,
the gap function can be greatly slowed down.

\section{Related Work}
It is worth mentioning that the memory in $\attd$ is conceptually different from the buffer for experience replay \citep{lin1992self,mnih2015human},
though both store previous transitions.
A replay buffer is typically used to sample mini-batch data randomly.
But $\attd$ only deterministically uses the first and the last entries in the memory.

Instead of minimizing NEU, \citet{feng2019kernel} propose to minimize a kernel loss based on the Bellman error (cf. \citet{baird1995residual}).
In the Markovian setting we consider in this paper,
\citet{feng2019kernel} develop a gradient estimator of this kernel loss.
This gradient estimator is consistent as the size of the replay buffer grows to infinity.
However,
other than the consistency,
no convergence analysis is provided.
Indeed,
as demonstrated in Section~\ref{sec as convergence},
analyzing the almost sure convergence of such algorithms is extremely challenging, and no standard stochastic approximation tools apply.
We conjecture that our new analysis techniques in Section~\ref{sec as convergence} could further help build an almost sure convergence of their algorithms under certain kernels with some moderate regularization.

GTD is one of the many possible methods to address the deadly triad issue.
Other methods include emphatic TD methods \citep{mahmood2015emphatic,sutton2016emphatic,hallak2016generalized,zhang2021truncated,he2023loosely},
target networks \citep{zhang2021breaking,fellows2023target,che2024target},
and density ratio methods \citep{hallak2017consistent,liu2018breaking,nachum2019dualdice,zhang2020gradientdice}.
We refer the reader to \citet{ghiassian2021empirical,ghiassian2024off} for more thorough empirical study of off-policy prediction algorithms.

\section{Conclusion}
In this paper,
we revisit the derivation of the seminal GTD algorithm.
We demonstrate that the idea behind the $\atd$ algorithm can lead to a new off-policy policy evaluation algorithm $\attd$ that is as competitive as GTD in terms of both asymptotic convergence, convergence rate, and per-step computation cost.
$\attd$ does incur additional memory cost, which we argue is negligible in any empirical implementations.
The main advantage of $\attd$ over GTD is that it has only one set of parameters and one learning rate.
It is well documented that the two learning rates in GTD are hard to tune in many empirical problems.
As said in \citet{sutton2009convergent}, ``we are still exploring different ways of setting the step-size parameters'' (of GTD).
It is worth noting again that the main contribution of this work is the rediscovery of the $\atd$ idea,
leading to the $\attd$ algorithm.
That being said,
the empirical study in this work is only preliminary and we leave a more thorough empirical study for future work.

\section*{Acknowledgements}
This work is supported in part by the US National Science Foundation (NSF) under grants III-2128019 and SLES-2331904.

\bibliography{bibliography}
\bibliographystyle{iclr2025_conference}
\newpage
\appendix

\section{Proof of Theorem~\ref{thm finite}}

\label{sec proof of thm finite}
\begin{proof}
    We first define a few shorthands. We use $\bar g(w)$ and $g_t(w)$ to denote the true gradient and its stochastic estimate at time $t$, respectively, i.e.,
    \begin{align}
        \bar{g}(w) =& - A^\top (A w + b), \\
        g_t(w) =& - \hat{A}_{t+f(t)}^\top (\hat{A}_t w + \hat{b}_t).
    \end{align}
    We further define
    \begin{align}
        \Lambda_t(w) = \langle w - w_*, g_t(w) - \bar{g}(w) \rangle,
    \end{align}
    where we recall that $w^*$ is defined in~\eqref{eq td fixed point}.
    The following lemma states several useful properties of the functions defined above.
\begin{lemma}\label{lem finite sample bounds}
    There exist constants $C_g$ and $C_{Lip}$ such that for all $w$, $w'$ with $\norm{w}, \norm{w'} \le B$, we have
    \begin{align}
        &\max\qty{g_t(w), \bar g(w), \Lambda_t(w)} \le C_g, \\
        &\langle w-w', \bar g(w) - \bar g(w') \rangle \le - \beta \norm{w - w'}^2, \\
        &\max\qty{\norm{g_t(w) - g_t(w')}, \norm{\bar g(w) - \bar g(w')}, |\Lambda_t(w) - \Lambda_t(w')|} \le C_{Lip} \norm{w - w'}.
    \end{align}
    We recall that $\beta$ is defined in~\eqref{eq pd beta}.
\end{lemma}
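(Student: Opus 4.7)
The plan is to verify each of the three bullet points by direct algebraic manipulation, using only the uniform bound $\sup_t \max\{\norm{\hat A_t}, \norm{\hat b_t}, \norm{A}, \norm{b}\} \le H$, the positive definiteness constant $\beta$ from~\eqref{eq pd beta}, and the radius constraint $\norm{w}, \norm{w'} \le B$. Since $\bar g$ and $g_t$ are both \emph{affine} in $w$, everything reduces to submultiplicativity of the operator norm together with Cauchy--Schwarz. Note that $\norm{w_*} \le B$ by hypothesis, so $\norm{w - w_*} \le 2B$ throughout.

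First I would establish the uniform boundedness claim. Writing $\bar g(w) = -A^\top A w - A^\top b$ and applying the triangle inequality gives $\norm{\bar g(w)} \le H^2 B + H^2$. The same bound applies to $\norm{g_t(w)}$ since $\hat A_{t+f(t)}$ and $\hat A_t$ have norm at most $H$ and $\norm{\hat b_t}\le H$. For $\Lambda_t(w)$ I would invoke Cauchy--Schwarz: $|\Lambda_t(w)| \le \norm{w-w_*}\bigl(\norm{g_t(w)}+\norm{\bar g(w)}\bigr) \le 2B \cdot 2H^2(B+1)$. Taking $C_g$ to be the largest of these constants finishes the first line.

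Next, the strong monotonicity bound is immediate from linearity: $\bar g(w) - \bar g(w') = -A^\top A(w-w')$, so
\begin{align}
    \langle w-w',\bar g(w)-\bar g(w')\rangle = -(w-w')^\top A^\top A (w-w') \le -\beta\norm{w-w'}^2
\end{align}
by~\eqref{eq pd beta}. For Lipschitz continuity of $\bar g$ and $g_t$, the same linearity argument gives $\norm{\bar g(w)-\bar g(w')} \le \norm{A}^2\norm{w-w'}\le H^2\norm{w-w'}$ and $\norm{g_t(w)-g_t(w')}\le \norm{\hat A_{t+f(t)}}\norm{\hat A_t}\norm{w-w'}\le H^2\norm{w-w'}$.

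The only step with any subtlety is the Lipschitz continuity of $\Lambda_t$, since $\Lambda_t$ is quadratic rather than linear in $w$. I would set $h_t(w) \doteq g_t(w)-\bar g(w)$ and use the ``add and subtract'' identity
\begin{align}
    \Lambda_t(w) - \Lambda_t(w') = \langle w-w', h_t(w)\rangle + \langle w'-w_*, h_t(w)-h_t(w')\rangle,
\end{align}
then apply Cauchy--Schwarz to each term. The first term is bounded by $\norm{w-w'}\cdot 2H^2(B+1)$ using the uniform bound on $h_t$, and the second by $2B\cdot 2H^2\norm{w-w'}$ using the Lipschitz bounds on $g_t$ and $\bar g$ just established. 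Collecting all constants into a single $C_{Lip}$ completes the proof. I do not anticipate any real obstacle: everything is routine and follows from the fact that $g_t$, $\bar g$ are affine and $\Lambda_t$ is at most quadratic, with all coefficient matrices uniformly bounded in $t$ by Assumption~\ref{assu chain} and the finiteness of the MDP.
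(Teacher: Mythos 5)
Your proposal is correct and follows essentially the same route as the paper's proof: uniform bounds via submultiplicativity and Cauchy--Schwarz, strong monotonicity directly from~\eqref{eq pd beta}, and Lipschitzness of $\Lambda_t$ via the same add-and-subtract decomposition of $\langle w-w_*, h_t(w)\rangle - \langle w'-w_*, h_t(w')\rangle$ (you merely group the cross terms symmetrically to the paper, arriving at the identical constant $2H^2(3B+1)$). No gaps.
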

The proof is provided in section~\ref{sec proof lem finite sample bounds}.

\begin{lemma}\label{lem expected lambda bound}
    For all time $t$ and $0<k<t$, we have the following bound
    \begin{align}
        \norm{\E\left[\Lambda_t(w_t)\right]} \le C_g C_\alpha C_{Lip} \ln\left(\frac{t}{t-k}\right) + 2B(B+1) C_M \begin{cases}
            1 & k < f(t-k)\\
            \chi^{f(t)} + \chi^{t - (t-k + f(t-k))} & k \ge f(t-k)
            \end{cases}.
    \end{align}
\end{lemma}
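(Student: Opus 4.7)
The plan is to introduce a lagged iterate $w_{t-k}$ and split
$\E[\Lambda_t(w_t)] = \E[\Lambda_t(w_t) - \Lambda_t(w_{t-k})] + \E[\Lambda_t(w_{t-k})]$.
The first piece is a Lipschitz approximation error that will produce the logarithmic term, while the second piece is a bias term for a ``frozen'' iterate, which is controllable via mixing of the Markov chain whenever $k$ is large enough relative to $f(t-k)$.

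For the first piece, I would invoke the Lipschitz property of $\Lambda_t$ from Lemma~\ref{lem finite sample bounds} to get $|\Lambda_t(w_t) - \Lambda_t(w_{t-k})| \le C_{Lip}\|w_t - w_{t-k}\|$. To bound the displacement $\|w_t - w_{t-k}\|$, I would use the projected update $w_{s+1} = \Gamma(w_s + \alpha_s g_s(w_s))$ together with non-expansiveness of $\Gamma$ (applied at the point $w_s$, which already lies in the ball) to conclude $\|w_{s+1}-w_s\| \le \alpha_s \|g_s(w_s)\| \le \alpha_s C_g$. Telescoping and bounding the harmonic sum $\sum_{s=t-k}^{t-1} C_\alpha/(s+1) \le C_\alpha \ln(t/(t-k))$ yields precisely $C_g C_\alpha C_{Lip}\ln(t/(t-k))$.

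For the second piece, I would condition on a $\sigma$-algebra $\fG$ for which $w_{t-k}$ is measurable. Because each update $w_{s+1}$ uses $\hat A_{s+f(s)}$, this natural $\fG$ is (roughly) $\fF_{(t-k)+f(t-k)}$. Under this conditioning $w_{t-k}$ becomes a constant, so $\E[\Lambda_t(w_{t-k}) \mid \fG] = \langle w_{t-k} - w_*,\, \E[g_t(w_{t-k}) - \bar g(w_{t-k}) \mid \fG]\rangle$. Using $\|w_{t-k} - w_*\| \le 2B$, the task reduces to uniformly bounding $\|\E[g_t(w) - \bar g(w) \mid \fG]\|$ for $\|w\| \le B$, where I would appeal to the underlying stochastic estimate in Lemma~\ref{lem stochastic estimates}. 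In Case~2 ($k \ge f(t-k)$) the conditioning time $(t-k)+f(t-k)$ lies at or before $t$, so geometric mixing of the chain applies: one factor $\chi^{k-f(t-k)} = \chi^{t-[(t-k)+f(t-k)]}$ decouples $\fG$ from $\hat A_t$, and a separate factor $\chi^{f(t)}$ decouples $\hat A_t$ from $\hat A_{t+f(t)}$ (each involving disjoint state segments of length $f(t)$). Both $\E[\hat A_{t+f(t)}^\top \hat A_t \mid \fG]$ and $\E[\hat A_{t+f(t)}^\top \hat b_t \mid \fG]$ end up within $C_M(\chi^{f(t)} + \chi^{k-f(t-k)})$ of their stationary values $A^\top A$ and $A^\top b$, yielding the case-2 bound. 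In Case~1 ($k < f(t-k)$) the conditioning time extends past $t$ so no mixing is available, and we fall back to the uniform estimate $C_M$ from the same lemma, producing the constant bound.

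The main obstacle will be the Case~2 mixing argument, because two distinct gaps must be controlled simultaneously inside one conditional expectation: the forward gap $f(t)$ between the two samples used in $g_t$, and the backward gap $k-f(t-k)$ between the conditioning filtration and the earlier of those samples. The clean decomposition that lets a single $C_M$ absorb both effects will have to be set up carefully through Lemma~\ref{lem stochastic estimates}, since standard TD analyses only need to handle a single mixing gap and cannot be invoked verbatim here.
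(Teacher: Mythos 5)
Your proposal follows essentially the same route as the paper's proof: the same decomposition via the lagged iterate $w_{t-k}$, the same use of non-expansiveness of $\Gamma$ and the harmonic-sum bound to produce the logarithmic term, and the same conditioning on $\fF_{t-k+f(t-k)}$ combined with Lemma~\ref{lem stochastic estimates} to handle the two mixing gaps in the frozen term. The case split on whether $k \ge f(t-k)$ and the resulting constants $2B(B+1)C_M$ also match the paper's argument.
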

The proof is provided in section~\ref{sec proof lem expected lambda bound}.

We are now ready to decompose the error as
\begin{align}
    &\E\left[\norm{w_{t+1} - w_*}^2\right] \\
    \le& \E\left[\norm{\Gamma(w_t + \alpha_t g_t(w_t)) - w_*}^2\right] \\
    \le& \E\left[\norm{\Gamma(w_t + \alpha_t g_t(w_t)) - \Gamma(w_*)}^2\right] \\
    \le& \E\left[\norm{w_t + \alpha_t g_t(w_t) - w_*}^2\right] \\
    =& \E\left[\norm{w_t  - w_*}^2 + \alpha_t^2 \norm{g_t(w_t)}^2 + 2\alpha_t \langle w_t - w_*, g_t(w_t) \rangle\right].
\end{align}
Since $\bar g(w_*) = 0$,
we have
\begin{align}
&\langle w_t - w_*, g_t(w_t) \rangle \\
=& \langle w_t - w_*, g_t(w_t) - \bar g(w_t) + \bar g(w_t) - \bar g(w_*) \rangle \\
=& \Lambda_t(w_t) + \langle w_t - w_*, \bar g(w_t) - \bar g(w_*)\rangle,
\end{align}
yielding
\begin{align}
    &\E\left[\norm{w_{t+1} - w_*}^2\right] \\
    \le& \E\left[\norm{w_t  - w_*}^2 + \alpha_t^2 \norm{g_t(w_t)}^2 + 2\alpha_t \langle w_t - w_*, g_t(w_t) \rangle\right] \\
    \le& \E\left[\norm{w_t  - w_*}^2 + \alpha_t^2 \norm{g_t(w_t)}^2 + 2\alpha_t \Lambda_t(w_t) + 2\alpha_t \langle w_t - w_*, \bar g(w_t) - \bar g(w_*)\rangle\right].
\end{align}
Applying Lemma~\ref{lem finite sample bounds}, we get
\begin{align}
    &\E\left[\norm{w_{t+1} - w_*}^2\right] \\
    \le& \E\left[\norm{w_t  - w_*}^2 + \alpha_t^2 C_g^2 + 2\alpha_t \Lambda_t(w_t) - 2\alpha_t \beta \norm{w_t - w_*}^2\right] \\
    =& (1 - 2 \alpha_t \beta) \E[\norm{w_t  - w_*}^2] + \alpha_t^2 C_g^2 + 2\alpha_t \E[\Lambda_t(w_t)].
\end{align}
Plugging in $\alpha_t = \frac{C_\alpha}{t+1}$ and multiplying both sides by $(t+1)$, we have
\begin{align}
    &(t+1) \E\left[\norm{w_{t+1} - w_*}^2\right] \\
    \le& (t+1) \left(1 - 2 \beta \frac{C_\alpha}{t+1}\right) \E[\norm{w_t  - w_*}^2] + (t+1) \left(\frac{C_\alpha}{t+1}\right)^2 C_g^2 + (t+1) \cdot 2 \frac{C_\alpha}{t+1} \E[\Lambda_t(w_t)] \\
    \le& \left(t+ 1 - 2 \beta C_\alpha \right) \E[\norm{w_t  - w_*}^2] + \frac{C_\alpha^2 C_g^2}{t+1} + 2 C_\alpha \E[\Lambda_t(w_t)].
\end{align}

Let $C_0 = \frac{1}{2 \beta}$. Then as $C_\alpha \ge C_0$, $2\beta C_\alpha \ge 2 \beta C_0 = 1$. Hence, $t+1-2\beta C_\alpha \le t$. Since $\E\left[\norm{w_{t+1} - w_*}^2\right] \ge 0$, we have
\begin{align}
    (t+1) \E\left[\norm{w_{t+1} - w_*}^2\right] \le t \E\left[\norm{w_{t+1} - w_*}^2\right] + \frac{C_\alpha^2 C_g^2}{t+1} + 2 C_\alpha \E[\Lambda_t(w_t)]
\end{align}

Applying the inequality recursively,
\begin{align}
    &T \E\left[\norm{w_T - w_*}^2\right] \\
    \le& \sum_{t=0}^{T-1} \left(\frac{C_\alpha^2 C_g^2}{t+1} + 2 C_\alpha \E[\Lambda_t(w_t)]\right) \\
    =& C_\alpha^2 C_g^2 \sum_{t=1}^{T} \frac{1}{t} + 2 C_\alpha \sum_{t=0}^{k} \E[\Lambda_t(w_t)] + 2 C_\alpha \sum_{t=k+1}^{T-1} \E[\Lambda_t(w_t)] \\
    \le& C_\alpha^2 C_g^2 [\ln(T)+1] + 2 C_\alpha \sum_{t=0}^{k} \E[\Lambda_t(w_t)] + 2 C_\alpha \sum_{t=k+1}^{T-1} \E[\Lambda_t(w_t)],
\end{align}
where the last inequality comes from the bound for harmonic numbers.
Applying Lemma~\ref{lem finite sample bounds} again yields
\begin{align}
    &T \E\left[\norm{w_{T} - w_*}^2\right] \\
    \le& C_\alpha^2 C_g^2 [\ln(T)+1] + 2 C_\alpha \sum_{t=0}^{k} C_g + 2 C_\alpha \sum_{t=k+1}^{T-1} \E[\Lambda_t(w_t)] \\
    \label{eq T times expected distance}
    =& C_\alpha^2 C_g^2 [\ln(T)+1] + 2 k C_\alpha C_g + 2 C_\alpha \sum_{t=k+1}^{T-1} \E[\Lambda_t(w_t)].
\end{align}
Now, we will use Lemma~\ref{lem expected lambda bound} to bound the last summation. 
Firstly, we take
\begin{align}
   k \doteq 2f(T). 
\end{align}
Secondly, Assumption \ref{assu gap} suggests that for some $\tau \in (0, \frac{3\nu-2}{\nu})$,
\begin{align}
    f(t) \le C_\tau \alpha_t^{-\tau} = C_\tau \left(\frac{C_\alpha}{(t+1)^\nu}\right)^{-\tau} = \frac{C_\tau (t+1)^{\tau \nu}}{C_\alpha^\tau}. 
\end{align}
Since $\nu \le 1$, $\tau \nu < \frac{3\nu-2}{\nu} = 3\nu-2 \le 1$. Hence, there exists a constant $T_0$ such that for all $T \ge T_0$, we have
\begin{align}
    2 f(T) \le \frac{2 C_\tau (T+1)^{\tau \nu}}{C_\alpha^\tau} < T.
\end{align}
For the rest of the argument, we will assume that $T \ge T_0$, and we will then have $2f(T) < T$. 

As $f$ is increasing, 
for all $t \in (k, T)$, 
we have
\begin{align}
   k = 2 f(T) \ge 2f(t) \geq f(t-k). 
\end{align}
So for $t > k$,
\begin{align}
    \E[\Lambda_t(w_t)] &\le C_g C_\alpha C_{Lip} \ln\left(\frac{t}{t-k}\right) + 2B(B+1) C_M \left( \chi^{f(t)} + \chi^{k - f(t-k)} \right)\explain{Lemma~\ref{lem expected lambda bound}}\\
    &\le C_g C_\alpha C_{Lip} \ln\left(\frac{t}{t-k}\right) + 4B(B+1) C_M \chi^{f(t)},
\end{align}
where the last inequality uses the fact that for all $t \in (k ,T)$,
\begin{align}
  k - f(t-k) = 2 f(T) - f(t - k) > 2 f(t) - f(t) = f(t).  
\end{align}
Summing them up then yields
\begin{align}
    &\sum_{t=k+1}^{T-1} \E[\Lambda_t(t)] \\
    \le& \sum_{t=k+1}^{T-1} \left(C_g C_\alpha C_{Lip} \ln\left(\frac{t}{t-k}\right) + 4B(B+1) C_M \chi^{f(t)}\right) \\
    \le& C_g C_\alpha C_{Lip} \sum_{t=k+1}^{T-1}(\ln(t) - \ln(t-k)) + 4B(B+1) C_M \sum_{t=k+1}^T \chi^{f(t)} \\
    \le& C_g C_\alpha C_{Lip} \left(\sum_{t=k+1}^{T-1} \ln(t) - \sum_{t=k+1}^{T-1} \ln(t-k)\right) + 4B(B+1) C_M \sum_{t=0}^\infty \chi^{f(t)} \\
    \le& C_g C_\alpha C_{Lip} \left(\sum_{t=k+1}^{T-1} \ln(t) - \sum_{t=1}^{T-k-1} \ln(t)\right) + 4B(B+1) L(f, \chi) \\
    \le& C_g C_\alpha C_{Lip} \left(\sum_{t=T-k}^{T-1} \ln(t) - \sum_{t=1}^k \ln(t)\right) + 4B(B+1) L(f, \chi) \\
    \le& C_g C_\alpha C_{Lip} \left(\sum_{t=T-k}^{T-1} \ln(t) - \ln[t - (T-k-1)]\right) + 4B(B+1) L(f, \chi) \\
    \le& C_g C_\alpha C_{Lip} \sum_{t=T-k}^{T-1} \log\left[\frac{t}{t-(T-k-1)}\right] + 4B(B+1) L(f, \chi) \\
    \le& C_g C_\alpha C_{Lip} \sum_{t=T-k}^{T-1} \log(T) + 4B(B+1) L(f, \chi) \\
    \label{eq sum of lambda_t}
    \le& C_g C_\alpha C_{Lip} k \log(T) + 4B(B+1) L(f, \chi),
\end{align}
where second last inequality holds because 
\begin{align}
  \log\left[\frac{t}{t-(T-k-1)}\right] \le \log(T)  
\end{align}
holds for all $T-k-1 < t < T$
and $L(f, \chi) = \sum_{t=0}^\infty \chi^{f(t)} < \infty$ due to Assumption~\ref{assu gap}.

Plugging \eqref{eq sum of lambda_t} into \eqref{eq T times expected distance} then yields
\begin{align}
    &T \E[\norm{w_T - w_*}^2] \\
    \le& C_\alpha^2 C_g^2 [\ln(T)+1] + 2 k C_\alpha C_g + 2 C_\alpha [C_g C_\alpha C_{Lip} k \log(T) + 4B(B+1) L(f, \chi)] \\
    =& C_\alpha^2 C_g^2 [\ln(T)+1] + 2 k C_\alpha C_g + 2 C_g C_\alpha^2 C_{Lip} k \log(T) + 8 C_\alpha B(B+1) L(f, \chi),
\end{align}
Since we have defined $k \doteq 2f(T)$,
we have that for $T \ge T_0$,
\begin{align}
    &\E[\norm{w_T - w_*}^2] \\
    \le& \frac{C_\alpha^2 C_g^2 [\ln(T)+1]}{T} + \frac{4 f(T) C_\alpha C_g}{T} + \frac{4 C_g C_\alpha^2 C_{Lip} f(T) \log(T)}{T} + \frac{8 C_\alpha B(B+1) L(f, \chi)}{T}.
\end{align}
Thus, for all $T$, we have
\begin{align}
    \E[\norm{w_T - w_*}^2] = \fO\left(\frac{f(T)\ln(T)}{T}\right),
\end{align}
which completes the proof.
\end{proof}

\section{Auxiliary Lemmas}

\begin{lemma}
    \label{lem stochastic estimates}
    Let Assumption~\ref{assu chain} hold. 
    Then there exists a constant $C_M > 0$ and $\chi \in [0, 1)$ such that
    \begin{align}
        \label{eq matrix converge1}
        &\norm{\E\left[\hat A_t \right] - A} \leq C_M \chi^t, \\
        \label{eq matrix converge2}
        &\norm{\E\left[\hat A_{t+k}^\top \hat A_t\right] - \E\left[\hat A_{t+k}\right]^\top \E\left[\hat A_t\right]} \leq C_M \chi^k, \\
        \label{eq matrix converge3}
        &\norm{\E\left[\hat A^\top_{t+k} \hat A_t | \fF_l\right] - A^\top A} \leq C_M  \begin{cases}
            1 & t < l\\
            \chi^{k} + \chi^{t - l} & t \ge l
        \end{cases}.
    \end{align}
    Similarly,
    \begin{align}
        &\norm{\E\left[\hat b_t \right] - A} \leq C_M \chi^t, \\
        &\norm{\E\left[\hat A_{t+k}^\top \hat b_t\right] - \E\left[\hat A_{t+k}\right]^\top \E\left[\hat b_t\right]} \leq C_M \chi^k, \\
        &\norm{\E\left[\hat A^\top_{t+k} \hat b_t | \fF_l\right] - A^\top b} \leq C_M  \begin{cases}
            1 & t < l\\
            \chi^{k} + \chi^{t - l} & t \ge l
        \end{cases}. \\
    \end{align}
\end{lemma}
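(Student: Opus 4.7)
The plan is to derive all six bounds from the geometric ergodicity of the Markov chain $\qty{(S_t, A_t)}$ induced by $\mu$. Under Assumption~\ref{assu chain}, this chain is finite, irreducible, and aperiodic, so the standard convergence theorem for finite Markov chains yields constants $C > 0$ and $\chi \in [0,1)$ with
\begin{align}
    \textstyle \max_{s_0, s, a}\abs{\Pr(S_t = s, A_t = a \mid S_0 = s_0) - d_\mu(s)\mu(a|s)} \le C \chi^t.
\end{align}
Combined with the deterministic bound $\sup_t \max\qty{\norm{\hat A_t}, \norm{\hat b_t}} \le H$, this single mixing estimate is the engine behind all three types of bounds. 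I would first verify by direct expansion that the stationary expectations of $\hat A_t$ and $\hat b_t$ are exactly $A$ and $b$, by comparing against the definitions in~\eqref{eq A matrix}.

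For~\eqref{eq matrix converge1}, I would write $\E[\hat A_t] - A = \E[\hat A_t] - \E_{\text{stat}}[\hat A_t]$ and bound this as $H$ times the total variation distance between the time-$t$ marginal and the stationary marginal of $(S_t, A_t, S_{t+1})$, which decays geometrically in $t$ by the mixing estimate above. The same argument handles $\hat b_t$ with $b$ in place of $A$.

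For the decorrelation bound~\eqref{eq matrix converge2}, I use a Markov decoupling argument: conditioning on $S_{t+1}$ and using the Markov property,
\begin{align}
    \E[\hat A_{t+k}^\top \hat A_t] = \E\qty[\E[\hat A_{t+k}^\top \mid S_{t+1}]\, \hat A_t],
\end{align}
and $\E[\hat A_{t+k}^\top \mid S_{t+1} = s]$ differs from its $s$-averaged value $\E[\hat A_{t+k}^\top]$ by at most $\fO(\chi^{k-1})$ uniformly in $s$, via the mixing bound applied for $k-1$ steps starting from $s$. Multiplying by the bounded $\hat A_t$ and taking expectations yields the desired $\chi^k$ rate after possibly enlarging $C_M$. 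For~\eqref{eq matrix converge3}, I would split on the sign of $t - l$: when $t < l$ the deterministic bound $H^2$ immediately gives the stated constant $1$, while for $t \ge l$ I decompose
\begin{align}
    &\norm{\E[\hat A_{t+k}^\top \hat A_t \mid \fF_l] - A^\top A} \\
    &\qquad \le \norm{\E[\hat A_{t+k}^\top \hat A_t \mid \fF_l] - \E[\hat A_{t+k}^\top \mid \fF_l] \E[\hat A_t \mid \fF_l]} \\
    &\qquad\quad + \norm{\E[\hat A_{t+k}^\top \mid \fF_l] \E[\hat A_t \mid \fF_l] - A^\top A},
\end{align}
where the first term repeats the Markov decoupling above applied to the chain restarted from $S_l$ and contributes $\chi^k$, while the second factors as a product of two expectations, each of which approaches $A^\top$ or $A$ at rate $\chi^{t-l}$ by the argument of~\eqref{eq matrix converge1} now starting from $S_l$. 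The $\hat b_t$ versions are structurally identical.

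The main obstacle will be bookkeeping: ensuring that a single pair $(C_M, \chi)$ suffices uniformly across all six inequalities and carefully tracking which time marginal is being compared to stationarity at each step, especially in the conditioned case~\eqref{eq matrix converge3} where the starting distribution is a Dirac mass at $S_l$ rather than $p_0$. Conceptually, once the geometric mixing estimate is in hand, every bound reduces to either a total-variation comparison of a bounded function against its stationary mean or a Markov decoupling across a well-separated time gap, both of which are standard.
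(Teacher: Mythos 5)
Your proposal is correct and follows essentially the same route as the paper: a geometric mixing estimate for the (augmented) chain $(S_t, A_t, S_{t+1})$, a total-variation comparison against the stationary mean for~\eqref{eq matrix converge1}, a Markov decoupling across the $k$-step gap for~\eqref{eq matrix converge2}, and for~\eqref{eq matrix converge3} the identical two-term decomposition into a conditional-covariance piece (rate $\chi^k$) plus a product-of-conditional-means piece (rate $\chi^{t-l}$), with the trivial $\fO(1)$ bound when $t < l$. The only cosmetic difference is that the paper works directly with joint versus product distributions of the augmented chain where you condition on $S_{t+1}$; these are the same argument.
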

\begin{proof}
    For the simplicity of display, 
    we include the proof only for the first half of the lemma.
    The second half is identical up to change of notations and is, therefore, omitted to avoid verbatim.

    Define an augmented chain $\qty{Y_t}$ evolving in
    \begin{align}
        \fY \doteq \qty{(s, a, s') \in \fS \times \fA \times \fS \mid d_\mu(s) > 0, \mu(a|s) > 0, p(s'|s, a) > 0}
    \end{align}
    as
    \begin{align}
        Y_t \doteq (S_t, A_t, S_{t+1}).
    \end{align}
    According to the definition of $\fF_t$,
    it can be easily seen that $Y_t$ is adapted to $\fF_{t+1}$.
    Assumption~\ref{assu chain} immediately ensures that $\qty{Y_t}$ is also ergodic with a stationary distribution
    \begin{align}
        d_Y(y) = d_\mu(s)\mu(a|s)p(s'|s, a).
    \end{align} 
    Here we have used $y$ as shorthand for $(s, a, s')$.
    Define functions
    \begin{align}
        \hat A(y) \doteq& \rho(s, a)x(s)\left(\gamma x(s') - x(s)\right)^\top, \\
        \hat b(y) \doteq& \rho(s, a)x(s)r(s, a).
    \end{align}
    It can then be easily computed that 
    \begin{align}
        \hat A_t =& \hat A(Y_t), \\
        A =& \sum_y d_Y(y)\hat A(y).
    \end{align}
    Assumption~\ref{assu chain} ensures that the chain $\qty{Y_t}$ mixes geometrically fast.
    In other words,
    there exist constants $\chi \in [0, 1)$ and $C_0 > 0$ such that for any $t$ and $k$,
    \begin{align}
        \max_y \sum_{y'} \abs{\Pr(Y_{t+k}  = y' | Y_t = y) - d_Y(y')} \leq C_0 \chi^k.
    \end{align} 
    This is a well-known result, and we refer the reader to Theorem 4.7 of \citet{levin2017markov} for detailed proof.
    Then we have 
    \begin{align}
        &\norm{\E\left[\hat A_t\right] - A} \\
        =&\norm{\sum_y \Pr(Y_t = y)\hat A(y) - \sum_y d_Y(y) \hat A(y)} \\
        \leq& \sum_y \norm{\Pr(Y_t = y) - d_Y(y)}\norm{\hat A(y)} \\
        \leq& \max_y\norm{\hat A(y)}\sum_y \norm{\Pr(Y_t = y) - d_Y(y)} \\
        \leq& HC_0 \chi^t,
    \end{align}
    which completes the proof of~\eqref{eq matrix converge1}.
    Similarly, we have 
    \begin{align}
        & \norm{\E\left[\hat A_{t+k}^\top \hat A_t\right] - \E\left[\hat A_{t+k}\right] \E\left[\hat A_t\right]}\\
        =& \norm{\sum_{y'} \sum_y \Pr(Y_{t+k} = y', Y_t = y) \hat A(y')^\top \hat A(y) - \left(\sum_{y'} \Pr(Y_{t+k} = y') \hat A(y') \right)^\top \left(\sum_y \Pr(Y_t = y) \hat A(y)\right) } \\
        =& \norm{\sum_{y'} \sum_y \left(\Pr(Y_{t+k} = y', Y_t = y) - \Pr(Y_{t+k} = y') \Pr(Y_t = y)\right) A(y')^\top \hat A(y)} \\
        \le& \max_{y'} \norm{\hat A(y')^\top} \max_y \norm{\hat A(y)} \norm{\sum_{y'} \sum_y \left(\Pr(Y_{t+k} = y', Y_t = y) - \Pr(Y_{t+k} = y') \Pr(Y_t = y)\right)} \\
        \le& H \cdot H \sum_{y'} \sum_{y} \abs{\Pr(Y_{t+k} = y', Y_t = y) - \Pr(Y_{t+k} = y') \Pr(Y_t = y)} \\
        \le& H^2 \sum_{y'} \sum_{y} \abs{\Pr(Y_{t+k} = y' | Y_t = y) \Pr(Y_t = y) - \Pr(Y_{t+k} = y') \Pr(Y_t = y)} \\
        \le& H^2 \sum_{y'} \sum_{y} \abs{\Pr(Y_{t+k} = y' | Y_t = y) - \Pr(Y_{t+k} = y')}  \Pr(Y_t = y) \\
        \le& H^2 \sum_{y'} \sum_{y} \abs{\Pr(Y_{t+k} = y' | Y_t = y) - \Pr(Y_{t+k} = y')} \\
        \le& H^2 \sum_{y'} \sum_{y} \abs{\Pr(Y_{t+k} = y' | Y_t = y) - d_Y(y')| + |d_Y(y') - \Pr(Y_{t+k} = y')} \\
        \le& H^2 \sum_{y} \left(\sum_{y'} |\Pr(Y_{t+k} = y' | Y_t = y) - d_Y(y')| + \sum_{y'} |\Pr(Y_{t+k} = y') - d_Y(y')| \right) \\
        \le& H^2 \sum_{y} \left(C_0 \chi^k + C_0 \chi^{t+k} \right) \\
        \le& 2 H^2 |\fY| C_0  \chi^k,
    \end{align}
    which proves \eqref{eq matrix converge2}.
    This also suggests $\forall l$,
    \begin{align}
        \norm{\E\left[\hat A_{t+k}^\top \hat A_t\mid \fF_l\right] - \E\left[\hat A_{t+k} \mid \fF_l\right]^\top \E\left[\hat A_t\mid \fF_l\right]} \leq 2H^2 \abs{\fY}C_0 \chi^k.
    \end{align}
    To see this, we consider the two cases of whether $l<t$ separately. 
    
    \textbf{Case 1}: $l < t$. Then, by the Markov property,
    \begin{align}
        &\norm{\E\left[\hat A_{t+k}^\top \hat A_t\mid \fF_l\right] - \E\left[\hat A_{t+k} \mid \fF_l\right]^\top \E\left[\hat A_t\mid \fF_l\right]} \\
        =& \norm{\E\left[\hat A_{t+k}^\top \hat A_t\mid Y_l\right] - \E\left[\hat A_{t+k} \mid Y_l\right]^\top \E\left[\hat A_t\mid Y_l\right]} \\
        \le& 2H^2 |\fY|C_0 \chi^k.
    \end{align}
    
    \textbf{Case 2}: $l \ge t$. Then $\hat A_t$ is deterministic given $\fF_l$. and
    \begin{align}
        &\norm{\E\left[\hat A_{t+k}^\top \hat A_t\mid \fF_l\right] - \E\left[\hat A_{t+k} \mid \fF_l\right]^\top \E\left[\hat A_t\mid \fF_l\right]} \\
        =&\norm{\E\left[\hat A_{t+k}^\top \E\left[\hat A_t \mid \fF_l \right]\mid \fF_l\right] - \E\left[\hat A_{t+k} \mid \fF_l\right]^\top \E\left[\hat A_t\mid \fF_l\right]} \\
        =&\norm{\E\left[\hat A_{t+k} \mid \fF_l\right]^\top \E\left[\hat A_t \mid \fF_l \right] - \E\left[\hat A_{t+k} \mid \fF_l\right]^\top \E\left[\hat A_t\mid \fF_l\right]} \\
        =& 0.
    \end{align}
    Lastly, combining the geometrical convergence suggested in \eqref{eq matrix converge1} and the geometrically decaying correlation implied by \eqref{eq matrix converge2}, we can prove \eqref{eq matrix converge3} in the following manner.
    First, 
    \begin{align}
        &\norm{\E\left[\hat A^\top_{t+k} \hat A_t | \fF_l\right] - A^\top A} \\
        \le& \norm{\E\left[\hat A^\top_{t+k} \hat A_t | \fF_l\right] - \E\left[\hat A_{t+k} | \fF_l\right]^\top \E\left[\hat A_t | \fF_l\right] + \E\left[\hat A_{t+k} | \fF_l\right]^\top \E\left[\hat A_t | \fF_l\right] - A^\top A} \\
        \le& \norm{\E\left[\hat A^\top_{t+k} \hat A_t | \fF_l\right] - \E\left[\hat A_{t+k} | \fF_l\right]^\top \E\left[\hat A_t | \fF_l\right]} + \norm{\E\left[\hat A_{t+k} | \fF_l\right]^\top \E\left[\hat A_t | \fF_l\right] - A^\top A} \\
        \le& 2 H^2 |\fY| C_0 \chi^k  + \norm{\E\left[\hat A_{t+k} | \fF_l\right]^\top \E\left[\hat A_t | \fF_l\right] - \E\left[\hat A_{t+k} | \fF_l\right]^\top A + \E\left[\hat A_{t+k} | \fF_l\right]^\top A - A^\top A} \\
        \le& 2 H^2 |\fY| C_0 \chi^k + \norm{\E\left[\hat A_{t+k} | \fF_l\right]^\top (\E\left[\hat A_t | \fF_l\right] - A)} + \norm{(\E\left[\hat A_{t+k} | \fF_l\right] - A)^\top A} \\
        \le& 2 H^2 |\fY| C_0 \chi^k + \norm{\E\left[\hat A_{t+k} | \fF_l\right]^\top} \norm{\E\left[\hat A_t | \fF_l\right] - A} + \norm{(\E\left[\hat A_{t+k} | \fF_l\right] - A)^\top} \norm{A} \\
        \le& 2 H^2 |\fY| C_0 \chi^k + H \norm{\E\left[\hat A_t | \fF_l\right] - A} + H \norm{\E\left[\hat A_{t+k} | \fF_l\right] - A} \\
        \le& 2 H^2 |\fY| C_0 \chi^k + H\left(\norm{\E\left[\hat A_t | \fF_l\right] - A} + \norm{\E\left[\hat A_{t+k} | \fF_l\right] - A}\right).
    \end{align}
    We now bound the last term.
    For $t < l$, we use the trivial bound
    \begin{align}
        \norm{\E\left[\hat A_t | \fF_l\right] - A} + \norm{\E\left[\hat A_{t+k} | \fF_l\right] - A} &\le 4H.
    \end{align}
    For $t \ge l$, both $Y_t$ and $Y_{t+k}$ are not adapted to $\fF_l$.
    We, therefore, have 
    \begin{align}
        \norm{\E\left[\hat A_t | \fF_l\right] - A} + \norm{\E\left[\hat A_{t+k} | \fF_l\right] - A}
        &\le H C_0 \chi^{t-l} + H C_0 \chi^{t+k-l} \le 2 H C_0 \chi^{t-l}.
    \end{align}
    Combining the results, we obtain
    \begin{align}
        \norm{\E\left[\hat A^\top_{t+k} \hat A_t | \fF_l\right] - A^\top A} &\leq 2 H^2 |\fY| C_0 \chi^{k} + \begin{cases}
            4 H^2 & t < l\\
            2 H^2 C_0 \chi^{t - l} & t \ge l
        \end{cases} \\
        &=\begin{cases}
            2 H^2 |\fY| C_0 + 4 H^2 & t < l\\
            2 H^2 |\fY| C_0 \chi^{k} + 2 H^2 C_0 \chi^{t - l} & t \ge l
        \end{cases},\explain{Since $\chi \le 1$}
    \end{align}
    which completes the proof of~\eqref{eq matrix converge3}.
\end{proof}

\begin{lemma}\label{lem sufficient condition for gap assu}
    If the gap function $f(t) = \floor{h(t) \ln(t)}$ where $h$ is a non-negative, increasing function tending to infinity,
    then $\sum_{t=0}^\infty \chi^{f(t)}$ for all $\chi \in (0,1)$.
\end{lemma}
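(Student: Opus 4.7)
\medskip
\noindent\textbf{Proof plan.} The plan is to dominate the tail of $\sum_{t} \chi^{f(t)}$ by a $p$-series with $p=2$, using only the defining properties of $h$ and the fact that $\log\chi<0$. First, I would use the elementary bound $f(t)=\floor{h(t)\ln t}\ge h(t)\ln t - 1$, which, since $\chi\in(0,1)$ makes $x\mapsto \chi^x$ strictly decreasing, yields
\begin{align}
    \chi^{f(t)} \le \chi^{h(t)\ln t - 1} = \chi^{-1}\,t^{h(t)\ln\chi}
\end{align}
for every $t\ge 1$.

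Next, I would exploit that $\ln\chi<0$ and $h(t)\to\infty$. By monotonicity of $h$ and its divergence, there exists a threshold $t_0\ge 3$ such that for all $t\ge t_0$,
\begin{align}
    h(t)\,|\ln\chi| \ge 2, \qquad\text{equivalently}\qquad h(t)\ln\chi \le -2.
\end{align}
Combining the two displays gives $\chi^{f(t)}\le \chi^{-1}\,t^{-2}$ for every $t\ge t_0$, so the tail $\sum_{t\ge t_0}\chi^{f(t)}$ is bounded above by $\chi^{-1}\sum_{t\ge t_0} t^{-2}$, which is finite.

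Finally, I would observe that the initial segment of the sum (the indices $t<t_0$, together with $t=0$ where $\ln t$ is undefined and $\chi^{f(0)}$ is simply interpreted as a finite constant) contains only finitely many terms, each bounded by $1$. Adding this finite contribution to the summable tail produces a finite total, which is exactly the claim $\sum_{t=0}^{\infty}\chi^{f(t)}<\infty$ for every $\chi\in(0,1)$.

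\noindent\textbf{Anticipated difficulty.} There is essentially no technical obstacle beyond basic bookkeeping: the argument reduces to noting that $\chi^{h(t)\ln t}$ is a stretched exponential with a growing exponent in absolute value. The only mildly delicate point is handling small $t$ (where $\ln t\le 0$ or is undefined) correctly, which is absorbed by splitting the sum at a finite threshold $t_0$ chosen after $h$ has grown enough to force $h(t)|\ln\chi|\ge 2$; this split is always available because $h\to\infty$.
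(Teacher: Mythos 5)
Your proof is correct and follows essentially the same route as the paper's: bound $\floor{h(t)\ln t}\ge h(t)\ln t-1$, rewrite $\chi^{h(t)\ln t}$ as $t^{h(t)\ln\chi}$, use $h(t)\to\infty$ to force the exponent below $-2$ past a finite threshold, and conclude by comparison with $\sum t^{-2}$. Your handling of the finitely many initial terms (including $t=0$, where $\ln t$ is undefined) is slightly more careful than the paper's, but the argument is the same.
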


\begin{proof}
    Firstly, we should note that for all $t$, $\floor{h(t) \ln(t)} > h(t) \ln(t) - 1$.
    Therefore, take arbitrary $\chi \in (0,1)$,
    \begin{align}
        \sum_{t=0}^\infty \chi^{f(t)} \le \sum_{t=0}^\infty \chi^{h(t)\ln(t)-1} = \frac{1}{\chi} \sum_{t=0}^\infty e^{\ln(\chi) h(t)\ln(t)} =  \frac{1}{\chi} \sum_{t=0}^\infty t^{\ln(\chi) h(t)}.
    \end{align}

    Since $h$ is increasing in $t$ and tending to infinity, there exists a $T$ such that got all $t \ge T$, $f(t) \ge -\frac{2}{\ln(\chi)}$. Then, for all $t \ge T$, $t^{\ln(\chi) h(t)} \le t^{-2}$. Thus, by comparison test and p-test, we can conclude that $\sum_{t=0}^\infty \chi^{f(t)} \le \sum_{t=0}^\infty t^{-2} < \infty$.
\end{proof}

\section{Proof of Technical Lemmas}
\label{sec aux lemmas}


\subsection{Proof of Lemma~\ref{lem lr bounds}}
\label{sec proof lem lr bounds}
\begin{proof}
    We proceed via induction on $m$. 
    In particular, 
    we prove the following two inequalities for all $m$:
    \begin{align}
        \label{eq induction1}
        t_m \ge& \frac{m^{\eta + 1}}{16 \max(C_\alpha,1)}, \\
        \label{eq induction2}
        \alpha_t \le& T_m^2, \forall t \ge t_m.
    \end{align}

    \textbf{Base Case m=0}: Obviously we have
    \begin{align}
      t_0 = 0 = \frac{0^{\eta + 1}}{16 \max(C_\alpha,1)},  
    \end{align}
    so \eqref{eq induction1} holds for $m=0$.
    Since $\eta \in (0,1]$, 
    we have
    \begin{align}
      T_0 = \frac{16 \max(C_\alpha, 1)}{\eta + 1} \ge 8 \max(C_\alpha, 1).
    \end{align}
    Hence, for all $t \ge t_0 = 0$, 
    \begin{align}
        \alpha_t = \frac{C_\alpha}{(t+1)^\nu} \le C_\alpha \le 8 \max(C_\alpha, 1)^2 \leq T_0^2. 
    \end{align}
    So \eqref{eq induction2} holds.

    \textbf{Induction Step}: Suppose \eqref{eq induction1} and \eqref{eq induction2} hold for $m=k$.
    We now verify them for $m=k+1$.
    Letting $m=k$ in~\eqref{eq induction2} yields
    \begin{align}
        T_k \le& \sum_{t=t_k}^{t_{k+1}-1} \alpha_t \explain{Defintion of $\qty{t_m}$ in~\eqref{eq def tm}} \\
        \le& \sum_{t=t_k}^{t_{k+1}-1} T_k^2 = (t_{k+1} - t_k) T_k^2. 
    \end{align}
    Dividing both sides by $T_k$ yields
    \begin{align}
        t_{k+1} - t_k \ge \frac{1}{T_k} = \frac{(\eta+1) (k+1)^\eta}{16 \max(C_\alpha,1)}.
    \end{align}
    Consequently, we have
    \begin{align}
        t_{k+1} \ge& t_k + \frac{(\eta+1) (k+1)^\eta}{16 \max(C_\alpha,1)}  \\
        \ge& \frac{k^{\eta+1}}{16 \max(C_\alpha,1)} + \frac{(\eta+1) (k+1)^\eta}{16 \max(C_\alpha,1)},
    \end{align}
    where the last inequality results from inductive hypothesis~\eqref{eq induction1}.
    Since $\frac{(\eta+1) (k+1)^\eta}{16 \max\qty{C_\alpha, 1}}$ is monotonically increasing in $k$,
    we have 
    \begin{align}
        \frac{(\eta+1) (k+1)^\eta}{16 \max(C_\alpha,1)} \ge \int_k^{k+1} \frac{(\eta+1) (k+1)^\eta}{16 \max(C_\alpha,1)} = \frac{(k+1)^{\eta+1}}{16 \max(C_\alpha,1)} - \frac{k^{\eta+1}}{16 \max(C_\alpha,1)}.
    \end{align}
    We have thus verified \eqref{eq induction1} for $m=k+1$, i.e.
    \begin{align}
        t_{k+1} \ge \frac{k^{\eta+1}}{16 \max(C_\alpha,1)} + \frac{(k+1)^{\eta+1}}{16 \max(C_\alpha,1)} - \frac{k^{\eta+1}}{16 \max(C_\alpha,1)} = \frac{(k+1)^{\eta+1}}{16 \max(C_\alpha,1)}.
    \end{align}
    To verify \eqref{eq induction2} for $m=k+1$, 
    we will make use of our just proven~\eqref{eq induction1} with $m=k+1$.
    Take arbitrary $t \ge t_{k+1}$.
    As $\alpha_t = \frac{C_\alpha}{(t+1)^\nu}$ is a monotonically decreasing in $t$, we have
    \begin{align}
        \alpha_t \le \alpha_{t_{k+1}} = \frac{C_\alpha}{(t_{k+1}+1)^\nu} \le \frac{C_\alpha}{t_{k+1}^\nu}.
    \end{align}
    Using~\eqref{eq induction1} with $m=k+1$, we get
    \begin{align}
        \alpha_t \le \frac{C_\alpha}{t_{k+1}^\nu} \le \frac{C_\alpha}{\left(\frac{1}{16 \max(C_\alpha,1)} (k+1)^{\eta+1}\right)^\nu} \le \frac{16^\nu C_\alpha \max(C_\alpha,1)^\nu}{(k+2)^{(\eta+1)\nu}} \left(\frac{k+2}{k+1}\right)^{(\eta+1)\nu}.
    \end{align}
    As $\frac{k+2}{k+1} \le 2$ for $k \ge 0$ and $\eta, \nu \in (0, 1]$, we have
    \begin{align}
      \left(\frac{k+2}{k+1}\right)^{(\eta+1)\nu} \le 2^{(\eta+1)\nu}.  
    \end{align}
    Moreover, since $C_\alpha \le \max(C_\alpha, 1)$,
    we have 
    \begin{align}
      C_\alpha \max(C_\alpha, 1)^\nu \le \max(C_\alpha, 1)^{1+\nu}.  
    \end{align}
    Thus we have
    \begin{align}
        \alpha_t \le \frac{16^\nu C_\alpha \max(C_\alpha,1)^\nu}{(k+2)^{(\eta+1)\nu}} \left(\frac{k+2}{k+1}\right)^{(\eta+1)\nu} \le \frac{2^{(\eta+5)\nu} \max(C_\alpha,1)^{1+\nu}}{(k+2)^{(\eta+1)\nu}}.
    \end{align}
    Using $\eta, \nu \in (0, 1]$,
    we have
    \begin{align}
        0 \le& (\eta+5) \nu \le 6, \\
        \max(C_\alpha,1)^{1+\nu} \le& \max(C_\alpha,1)^2.
    \end{align}
    The definition of $\eta$ in~\eqref{eq def eta} implies
    \begin{align}
        2\eta < (\eta+1)\nu.
    \end{align}
    Hence, we get
    \begin{align}
        \alpha_t \le \frac{2^{(\eta+5)\nu} \max(C_\alpha,1)^{1+\nu}}{(k+2)^{(\eta+1)\nu}} \le \frac{64 \max(C_\alpha,1)^2}{(k+2)^{2\eta}}.
    \end{align}
    The second inequality in \eqref{eq def eta} together with the fact that $\nu \in (0, 1]$ implies that $\eta <1$.
    Consequently, we have $(\eta+1)^2 < 4$. 
    Therefore, $64 < \frac{256}{(1+\eta)^2}$ and
    \begin{align}
        \alpha_t \le \frac{64 \max(C_\alpha,1)^2}{(k+2)^{2 \eta}} \le \frac{256 \max(C_\alpha,1)^2}{(\eta+1)^2 (k+2)^{2 \eta}} = \left( \frac{16 \max(C_\alpha,1)}{(\eta+1) (k+2)^\eta} \right)^2 = T_{k+1}^2. 
    \end{align}
    We have now verified that~\eqref{eq induction2} holds for $m=k+1$,
    which completes the induction.
\end{proof}

\subsection{Proof of Lemma \ref{lem lr bounds 2}}
\label{sec proof lem lr bounds 2}
\begin{proof}
    The fact that $\nu \in (0, 1)$ and~\eqref{eq def eta} implies
    \begin{align}
        \eta < \frac{\nu}{2-\nu} \le \nu.
    \end{align}
    The fact that $\eta \in [0, 1]$ implies
    \begin{align}
        \frac{16}{\eta + 1} > 8.
    \end{align}
    Consequently, we have
    \begin{align}
        T_m = \frac{16}{\eta+1} \frac{\max(C_\alpha, 1)}{(m+1)^\eta} > 8 \frac{C_\alpha}{(m+1)^\nu} = 8 \alpha_m.
    \end{align}
    The definition of $\qty{t_m}$ in~\eqref{eq def tm} implies that $t_{m+1}-t_m \ge 1$ for all $m \ge 0$,
    so we have
    \begin{align}
        t_m \geq m.
    \end{align}
    Moreover, because $\alpha_t = \frac{C_\alpha}{(t+1)^\nu}$ is decreasing in $t$, for all $t \ge t_m \ge m$,
    we have
    \begin{align}
        \alpha_t \le \alpha_m \le \frac{T_m}{8}.
    \end{align}
    The definition of $\qty{t_m}$ in~\eqref{eq def tm} also implies that $\sum_{t=t_m}^{t_{m+1}-2} \alpha_t < T_m$.
    Then we have
    \begin{align}
        \bar \alpha_m = \sum_{t=t_m}^{t_{m+1}-1} \alpha_t =  \sum_{t=t_m}^{t_{m+1}-2} \alpha_t + \alpha_{t_{m+1}-1} \le T_m + \frac{T_m}{8} = \frac{9 T_m}{8} \le 2 T_m,
    \end{align}
    which completes the proof.
    Note here we have used the convention that $\sum_{t=i}^j \alpha_t \doteq 0$ if $i > j$.
\end{proof}

\subsection{Proof of Lemma~\ref{lem qm wt distance}}
\label{sec proof lem qm wt distance}
\begin{proof}
    For all $t \ge 0$,
    \begin{align}
        &\norm{w_{t+1} + A^{-1} b} \\
        =& \norm{\left(w_t + A^{-1} b\right)  + \alpha_t \hat{A}_{t+f(t)}^\top \left(\hat{A}_t (w_t + A^{-1} b) - \hat A_t A^{-1} b + \hat{b}_t\right) } \\
        \le& \norm{ w_t + A^{-1} b } + \alpha_t \norm{ \hat{A}_{t+f(t)}^\top} \left(\norm{\hat{A}_t} \norm{w_t + A^{-1} b} + \norm{\hat{A}_t A^{-1} b} + \|\hat{b}_t\|\right)\\
        \le& \norm{ w_t + A^{-1} b } + \alpha_t H \left(H \norm{ w_t + A^{-1} b } + H + H\right) \\
        =&\label{eq wtm+1 in terms of wtm} \norm{ w_t + A^{-1} b } + \alpha_t H^2 \left(\norm{ w_t + A^{-1} b } + 2 \right)
    \end{align}
    Therefore, by adding $2$ to both sides, we get
    \begin{align}
        \norm{w_{t+1} + A^{-1} b} + 2 \le (1 + \alpha_t H^2) \left(\norm{w_t + A^{-1} b} + 2\right).
    \end{align}
    Applying the inequality iteratively, we have that for all $t$ satisfying $t_m \le t \le t_{m+1}$
    \begin{align}
        \norm{w_t + A^{-1} b}+2 \le \left(\norm{w_{t_m} + A^{-1} b} + 2\right) \prod_{j=t_m}^{t_{m+1}} (1 + \alpha_t H^2) \le e^{\bar{\alpha}_m H^2} \left(\norm{w_{t_m} + A^{-1} b} + 2\right),
    \end{align}
    where for the last two inequalities, we used the fact that
    \begin{align}
        \prod_{j=t_m}^{t_{m+1}-1} (1 + \alpha_t H^2) \le \exp\left(\sum_{j=t_m}^{t_{m+1}-1} \alpha_t H^2 \right) = \exp\left(\bar{\alpha}_m H^2 \right).
    \end{align}
    As $\bar{\alpha}_m \le 2 T_m$ (Lemma~\ref{lem lr bounds 2}) and $e^{2 T_m H^2} \le 2$,
    \begin{align}
        \norm{w_t + A^{-1} b}+2 &\le e^{\bar{\alpha}_m H^2} \left(\norm{w_{t_m} + A^{-1} b} + 2\right) \\
        &\le e^{2 T_m H^2} \left(\norm{w_{t_m} + A^{-1} b} + 2\right) \\
        &\le 2 \left(\norm{w_{t_m} + A^{-1} b} + 2\right).
    \end{align}
    Hence, 
    \begin{align}
        \label{eq wt bound in terms of wtm}
        \norm{w_t + A^{-1} b} \le 2 \left(\norm{w_{t_m} + A^{-1} b} + 1\right).
    \end{align}
    Therefore, for all $t_m \le t \le t_{m+1}$, we have
    \begin{align}
        &\norm{\left(w_t + A^{-1} b\right) - q_m} \\
        =& \norm{\left(w_t + A^{-1} b\right) - \left(w_{t_m} + A^{-1} b\right)} \\
        \le& \norm{ \sum_{j=t_m}^{t-1} \left(w_{j+1} + A^{-1} b\right) - \left(w_j + A^{-1} b\right) } \\
        \le& \sum_{j=t_m}^{t-1} \norm{ \left(w_{j+1} + A^{-1} b\right) - \left(w_j + A^{-1} b\right) } \\        
        =& \sum_{j=t_m}^{t-1} \norm{ \alpha_j \hat{A}_{j+f(j)}^\top \left(\hat{A}_j (w_t + A^{-1} b) - \hat A_j A^{-1} b + \hat{b}_j\right) } \\        
        \le& \sum_{j=t_m}^{t_{m+1}-1} \alpha_t H^2 \left(\norm{w_t + A^{-1} b}+2\right)
        \explain{Similar to \eqref{eq wtm+1 in terms of wtm}} \\
        \le& \sum_{j=t_m}^{t_{m+1}-1} \alpha_t H^2 \left(2 \norm{w_{t_m} + A^{-1} b } + 4\right)
        \explain{Using \eqref{eq wt bound in terms of wtm}} \\
        =& 2 \bar{\alpha}_m H^2 \left(\norm{w_{t_m} + A^{-1} b } + 2\right) \\
        \le& 4 T_m H^2 \left(\norm{w_{t_m} + A^{-1} b } + 2\right) \explain{Using Lemma~\ref{lem lr bounds 2}} \\
        \le& 8 T_m H^2 \left(\norm{w_{t_m} + A^{-1} b } + 1\right) \\
        =& 8 T_m H^2 \left(\norm{q_m} + 1\right),
    \end{align}
    which completes the proof.
\end{proof}

\subsection{Proof of Lemma \ref{lem gm1 bound}}
\label{sec proof lem gm1 bound}
\begin{proof}
    \begin{align}
        \| q_m + g_{1,m}\|^2 &= \| q_m - \bar{\alpha}_m A^\top A q_m \|^2 = \norm{ \left(I - \bar{\alpha}_m A^\top A\right) q_m }^2 \\
        &= q_m^\top \left(I - \bar{\alpha}_m A^\top A \right)^\top \left(I - \bar{\alpha}_m A^\top A \right) q_m \\
        &= \| q_m \|^2 - 2 \bar{\alpha}_m q_m^\top A^\top A q_m + \bar{\alpha}_m^2 \norm{ A^\top A q_m }^2 \\
        &\le \| q_m \|^2 - 2 \bar{\alpha}_m \beta \|q_m\|^2 + \bar{\alpha}_m^2 \norm{ A^\top }^2 \| A \|^2 \|q_m\|^2 \\
        &\le \| q_m \|^2 - 2 \bar{\alpha}_m \beta \|q_m\|^2 + \bar{\alpha}_m^2 H^4 \|q_m\|^2 \\
        &\le \| q_m \|^2 - 2 \bar{\alpha}_m \beta \|q_m\|^2 + 2 \bar{\alpha}_m T_m H^4 \|q_m\|^2 \explain{Lemma~\ref{lem lr bounds 2}} \\
        &\le \| q_m \|^2 - 2 \bar{\alpha}_m \beta \|q_m\|^2 + \bar{\alpha}_m \beta \|q_m\|^2 \explain{Assumption of this Lemma} \\
        &\le \left(1 - \beta \bar \alpha_m \right)\| q_m \|^2 \\
        &\le \left(1 - \beta T_m \right)\| q_m \|^2 \explain{Definition of $T_m$ in~\eqref{eq def tm}}. 
    \end{align}
\end{proof}

\subsection{Proof of Lemma \ref{lem gm2 bound}}
\label{sec proof lem gm2 bound}
\begin{proof}
    \begin{align}
        \norm{g_{2,m}} \le& \norm{\sum_{t=t_m}^{t_{m+1}-1} \alpha_t \left(A^\top A - \E\left[\hat{A}_{t+f(t)}^\top \hat{A}_t | \fF_{t_m + f(t_m)} \right]\right) q_m} \\
        &+ \norm{\sum_{t=t_m}^{t_{m+1}-1} \alpha_t \E\left[\hat{A}_{t+f(t)}^\top \left(\hat{b}_t - \hat{A}_t A^{-1}b\right) | \fF_{t_m + f(t_m)} \right]} \\
        \le& \sum_{t=t_m}^{t_{m+1}-1} \alpha_t \norm{A^\top A - \E\left[\hat{A}_{t+f(t)}^\top \hat{A}_t | \fF_{t_m + f(t_m)} \right]} \norm{q_m} \\
        &+ \sum_{t=t_m}^{t_{m+1}-1} \alpha_t \norm{\E\left[\hat{A}_{t+f(t)}^\top \left(\hat{b}_t - \hat{A}_t A^{-1}b\right) | \fF_{t_m + f(t_m)} \right]} \\
        \le& \sum_{t=t_m}^{t_{m+1}-1} \alpha_t \norm{A^\top A - \E\left[\hat{A}_{t+f(t)}^\top \hat{A}_t | \fF_{t_m + f(t_m)} \right]} \norm{q_m} \\
        &+ \sum_{t=t_m}^{t_{m+1}-1} \alpha_t \norm{\E\left[\hat{A}_{t+f(t)}^\top \hat{b}_t - A^\top b | \fF_{t_m + f(t_m)}\right]} \\
        &+ \sum_{t=t_m}^{t_{m+1}-1} \alpha_t \norm{\E\left[ A^\top b - \hat{A}_{t+f(t)}^\top \hat{A}_t A^{-1}b | \fF_{t_m + f(t_m)}\right]} \\
        =& \sum_{t=t_m}^{t_{m+1}-1} \alpha_t \norm{A^\top A - \E\left[\hat{A}_{t+f(t)}^\top \hat{A}_t | \fF_{t_m + f(t_m)} \right]} \norm{q_m} \\
        &+ \sum_{t=t_m}^{t_{m+1}-1} \alpha_t \norm{\E\left[\hat{A}_{t+f(t)}^\top \hat{b}_t - A^\top b | \fF_{t_m + f(t_m)}\right]} \\
        &+ \sum_{t=t_m}^{t_{m+1}-1} \alpha_t \norm{\E\left[ A^\top A - \hat{A}_{t+f(t)}^\top \hat{A}_t | \fF_{t_m + f(t_m)}\right] A^{-1} b} \\
        \le& \sum_{t=t_m}^{t_{m+1}-1} \alpha_t \norm{A^\top A - \E\left[\hat{A}_{t+f(t)}^\top \hat{A}_t | \fF_{t_m + f(t_m)} \right]} \norm{q_m} \\
        &+ \sum_{t=t_m}^{t_{m+1}-1} \alpha_t \norm{\E\left[\hat{A}_{t+f(t)}^\top \hat{b}_t - A^\top b | \fF_{t_m + f(t_m)}\right]} \\
        &+ \sum_{t=t_m}^{t_{m+1}-1} \alpha_t \norm{\E\left[ A^\top A - \hat{A}_{t+f(t)}^\top \hat{A}_t | \fF_{t_m + f(t_m)}\right]} \norm{A^{-1} b}.
    \end{align}
    To bound the last three terms, we will consider separately whether $t_m + f(t_m) < t_{m+1}$. 
    If $t_m + f(t_m) \ge t_{m+1}$, then applying Lemma \ref{lem stochastic estimates} yields
    \begin{align}
        \norm{g_{2,m}} 
        \le& \sum_{t=t_m}^{t_{m+1}-1} \alpha_t C_M \norm{q_m} + \sum_{t=t_m}^{t_{m+1}-1} \alpha_t C_M + \sum_{t=t_m}^{t_{m+1}-1} \alpha_t C_M H \\
        =& C_M (\norm{q_m} + H + 1) \sum_{t=t_m}^{t_{m+1}-1} \alpha_t.
    \end{align}
    Since $\alpha_t = \frac{C_\alpha}{(1+t)^\nu}$ is a decreasing function in $t$, we have
    \begin{align}
        \norm{g_{2,m}} & \le C_M (\norm{q_m} + H + 1) \sum_{t=t_m}^{t_{m+1}-1} \alpha_{t_m} \\
        &= C_M (\norm{q_m} + H + 1) (t_{m+1}-t_m) \alpha_{t_m} \\
        &\le C_M (\norm{q_m} + H + 1) f(t_m) \alpha_{t_m}.
    \end{align}
    Assumption \ref{assu gap} suggests $f(t) \le C_\tau \alpha_t^{-\tau}$, so
    \begin{align}
        \norm{g_{2,m}} & \le C_M (\norm{q_m} + H + 1) C_\tau \alpha_t^{-\tau} \alpha_{t_m} \\
        &\le C_M C_\tau (\norm{q_m} + H + 1) \alpha_t^{1-\tau}.
    \end{align}
    Because $\alpha_t \le T_m^2$ for $t \ge t_m$ (Lemma~\ref{lem lr bounds}),
    \begin{align}
        \norm{g_{2,m}} &\le C_M C_\tau (\norm{q_m} + H + 1) T_m^{2(1-\tau)} \\
        &\le C_M C_\tau (H + 1) T_m^{2(1-\tau)} (\norm{q_m} + 1).
    \end{align}
    For the general case where $t_m + f(t_m) < t_{m+1}$, we break the summation $\sum_{t=t_m}^{t_{m+1} - 1}$ into two parts,
    i.e. $\sum_{t=t_m}^{t_m+f(t_m)-1}$ and $\sum_{t=t_m + f(t_m)}^{t_{m+1}-1}$ and apply Lemma \ref{lem stochastic estimates} separately.
    We have
    \begin{align}
        \norm{g_{2,m}} \leq& \sum_{t=t_m}^{t_m + f(t_m)-1} \alpha_t \norm{A^\top A - \E\left[\hat{A}_{t+f(t)}^\top \hat{A}_t \mid \fF_{t_m + f(t_m)} \right]} \norm{q_m} \\
        &+ \sum_{t=t_m + f(t_m)}^{t_{m+1}-1} \alpha_t \norm{A^\top A - \E\left[\hat{A}_{t+f(t)}^\top \hat{A}_t \mid \fF_{t_m + f(t_m)} \right]} \norm{q_m} \\
        &+ \sum_{t=t_m}^{t_m+f(t_m)-1} \alpha_t \norm{\E\left[\hat{A}_{t+f(t)}^\top \hat{b}_t - A^\top b \mid \fF_{t_m + f(t_m)}\right]} \\
        &+ \sum_{t=t_m + f(t_m)}^{t_{m+1}-1} \alpha_t \norm{\E\left[\hat{A}_{t+f(t)}^\top \hat{b}_t - A^\top b \mid \fF_{t_m + f(t_m)}\right]} \\
        &+ \sum_{t=t_m}^{t_m+f(t_m)-1} \alpha_t \norm{\E\left[ A^\top A - \hat{A}_{t+f(t)}^\top \hat{A}_t \mid \fF_{t_m + f(t_m)}\right]}\norm{A^{-1} b} \\
        &+ \sum_{t=t_m + f(t_m)}^{t_{m+1}-1} \alpha_t \norm{\E\left[ A^\top A - \hat{A}_{t+f(t)}^\top \hat{A}_t \mid \fF_{t_m + f(t_m)}\right]}\norm{A^{-1} b} \\
        \le& \sum_{t=t_m}^{t_m + f(t_m)-1} \alpha_t C_M \norm{q_m} + \sum_{t=t_m + f(t_m)}^{t_{m+1}-1} \alpha_t C_M \left(\chi^{f(t)} + \chi^{t - [t_m + f(t_m)]} \right) \norm{q_m} \\
        &+ \sum_{t=t_m}^{t_m+f(t_m)-1} \alpha_t C_M + \sum_{t=t_m + f(t_m)}^{t_{m+1}-1} \alpha_t C_M \left(\chi^{f(t)} + \chi^{t - [t_m + f(t_m)]} \right) \\
        &+ \sum_{t=t_m}^{t_m+f(t_m)-1} \alpha_t C_M H + \sum_{t=t_m + f(t_m)}^{t_{m+1}-1} \alpha_t C_M \left(\chi^{f(t)} + \chi^{t - [t_m + f(t_m)]} \right) H \\
        \leq& C_M (\norm{q_m} + H + 1) \left[\sum_{t=t_m}^{t_m + f(t_m)-1} \alpha_t + \sum_{t=t_m + f(t_m)}^{t_{m+1}-1} \alpha_t \left(\chi^{f(t)} + \chi^{t - [t_m + f(t_m)]} \right)\right].
    \end{align}
    Because $\alpha_t = \frac{C_\alpha}{(1+t)^\nu}$ is a decreasing function in $t$ and $\alpha_t \le T_m^2$ for $t \ge t_m$ (Lemma~\ref{lem lr bounds}), we get
    \begin{align}
        \norm{g_{2,m}} &\leq C_M (\norm{q_m} + H + 1) \left[\sum_{t=t_m}^{t_m + f(t_m)-1} \alpha_{t_m} + T_m^2 \sum_{t=t_m + f(t_m)}^{t_{m+1}-1} \left(\chi^{f(t)} + \chi^{t - [t_m + f(t_m)]} \right)\right] \\
        &\leq C_M (\norm{q_m} + H + 1) \left[f(t_m) \alpha_{t_m} +  T_m^2 \left(\sum_{t=t_m + f(t_m)}^{t_{m+1}-1} \chi^{f(t)} + \sum_{t=t_m + f(t_m)}^{t_{m+1}-1} \chi^{t - [t_m + f(t_m)]} \right)\right] \\
        &\leq C_M (\norm{q_m} + H + 1) \left[f(t_m) \alpha_{t_m} +  T_m^2 \left(\sum_{t=0}^{\infty} \chi^{f(t)} + \sum_{t=0}^{\infty} \chi^t \right)\right]
    \end{align}
    Assumption \ref{assu gap} suggests that $f(t) \le C_\tau \alpha_t^{-\tau}$ and $L(f, \chi) = \sum_{t=0}^\infty \chi^{f(t)} < \infty$. Therefore,
    \begin{align}
        \norm{g_{2,m}} &\leq C_M (\norm{q_m} + H + 1) \left[C_\tau \alpha_{t_m}^{-\tau} \alpha_{t_m} +  T_m^2 \left(L(f, \chi) + \sum_{t=0}^{\infty} \chi^t \right)\right] \\
        &\leq C_M (\norm{q_m} + H + 1) \left[C_\tau \alpha_{t_m}^{1-\tau} +  T_m^2 \left(L(f, \chi) + \frac{1}{1-\chi} \right)\right] \\
        &\leq C_M (\norm{q_m} + H + 1) \left[C_\tau T_m^{2(1-\tau)} + T_m^2 \left(L(f, \chi) + \frac{1}{1-\chi} \right)\right].
    \end{align}
    When $T_m \le 1$,
    we then have
    \begin{align}
        \norm{g_{2,m}} \le C_M T_m^{2(1-\tau)} (H+1) \left(C_\tau + L(f, \chi) + \frac{1}{1-\chi}\right) (\norm{q_m} + 1),
    \end{align}
    which completes the proof.
\end{proof}

\subsection{Proof of Lemma \ref{lem gm3 bound}}
\label{sec proof lem gm3 bound}
\begin{proof}
Firstly, we have
    \begin{align}
        \|g_{3,m}\| \leq& \norm{\sum_{t=t_m}^{t_{m+1}-1} \alpha_t \left(\E\left[\hat{A}_{t+f(t)}^\top \hat{A}_t | \fF_{t_m + f(t_m)} \right] - \hat{A}_{t+f(t)}^\top \hat{A}_t\right) q_m } \\
        &+ \norm{\sum_{t=t_m}^{t_{m+1}-1} \alpha_t \left(\E\left[\hat{A}_{t+f(t)}^\top \left(\hat{b}_t - \hat{A}_t A^{-1}b\right) | \fF_{t_m + f(t_m)} \right] -  \hat{A}_{t+f(t)}^\top \left(\hat{b}_t - \hat{A}_t A^{-1}b\right) \right)} \\
        \leq& \sum_{t=t_m}^{t_{m+1}-1} \alpha_t \left(\norm{\E\left[\hat{A}_{t+f(t)}^\top \hat{A}_t | \fF_{t_m + f(t_m)} \right]} + \norm{\hat{A}_{t+f(t)}^\top} \|\hat{A}_t\|\right) \|q_m\| \\
        &+ \sum_{t=t_m}^{t_{m+1}-1} \alpha_t \left(\norm{\E\left[\hat{A}_{t+f(t)}^\top \left(\hat{b}_t - \hat{A}_t A^{-1}b\right) | \fF_{t_m + f(t_m)} \right]} +  \norm{\hat{A}_{t+f(t)}^\top} \left(\|\hat{b}_t\| + \norm{\hat{A}_t A^{-1}b}\right)\right) \\
        \leq& \sum_{t=t_m}^{t_{m+1}-1} \alpha_t \left(\E\left[\norm{\hat{A}_{t+f(t)}^\top \hat{A}_t } | \fF_{t_m + f(t_m)} \right] + H \cdot H\right) \|q_m\| \\
        &+ \sum_{t=t_m}^{t_{m+1}-1} \alpha_t \left(\E\left[\norm{\hat{A}_{t+f(t)}^\top \left(\hat{b}_t - \hat{A}_t A^{-1}b\right)} | \fF_{t_m + f(t_m)} \right] + H (H + H)\right) \\
        \le& \sum_{t=t_m}^{t_{m+1}-1} \alpha_t \left(\E\left[\norm{\hat{A}_{t+f(t)}^\top} \norm{\hat{A}_t} | \fF_{t_m + f(t_m)} \right] + H^2 \right) \|q_m\| \\
        &+ \sum_{t=t_m}^{t_{m+1}-1} \alpha_t \left(\E\left[\norm{\hat{A}_{t+f(t)}^\top} \left(\norm{\hat{b}_t} + \norm{\hat{A}_t A^{-1}b}\right) | \fF_{t_m + f(t_m)} \right] +  2 H^2\right) \\
        \le& \sum_{t=t_m}^{t_{m+1}-1} \alpha_t \left(H^2 + H^2 \right) \|q_m\| + \sum_{t=t_m}^{t_{m+1}-1} \alpha_t \left(H(H+H) +  2 H^2\right) \\
        \le& 2 \bar{\alpha}_m H^2 (\norm{q_m} + 2) \\
        \le& 4 T_m H^2 (\norm{q_m} + 2) \explain{Lemma~\ref{lem lr bounds 2}} \\
        \le& 8 T_m H^2 (\norm{q_m} + 1).
    \end{align}    
Secondly, 
\begin{align}
 \E\left[g_{3,m} | \fF_{t_m + f(t_m)}\right] = 0   
\end{align}
holds trivially,
which completes the proof.
\end{proof}

\subsection{Proof of Lemma \ref{lem gm4 bound}}
\label{sec proof lem gm4 bound}
\begin{proof}
    \begin{align}
        \|g_{4,m}\| &= \norm{\sum_{t=t_m}^{t_{m+1}-1} \alpha_t \hat{A}_{t+f(t)}^\top \hat{A}_t  \left[q_m - \left(w_t + A^{-1} b\right)\right] } \\
        &\le \sum_{t=t_m}^{t_{m+1}-1} \alpha_t \norm{\hat{A}_{t+f(t)}^\top} \|\hat{A}_t\|  \norm{q_m - \left(w_t + A^{-1} b\right)}.
    \end{align}
    Then, by Lemma \ref{lem qm wt distance}, we have
    \begin{align}
        \|g_{4,m}\| &\le \sum_{t=t_m}^{t_{m+1}-1} \alpha_t H \cdot H \cdot 8 T_m H^2 (\|q_m\| + 1) \\
        &= 8 \bar{\alpha}_m T_m H^4 (\|q_m\| + 1),
    \end{align}
    which completes the proof.
\end{proof}

\subsection{Proof of Lemma \ref{lem qm supermartingale property}}
\label{sec proof lem qm supermartingale property}
\begin{proof}
    \begin{align}
        \norm{q_{m+1}}^2 =& \norm{q_m + g_{1,m} + g_{2,m} + g_{3,m} + g_{4,m}}^2 \\
        =& \norm{q_m + g_{1,m}}^2 + \norm{g_{2,m} + g_{3,m} + g_{4,m}}^2 + 2 (q_m + g_{1,m})^\top (g_{2,m} + g_{3,m} + g_{4,m}) \\
        \le& \norm{q_m + g_{1,m}}^2 + \norm{g_{2,m} + g_{3,m} + g_{4,m}}^2 \\
        &+ 2 (q_m + g_{1,m})^\top g_{3,m} + 2 \norm{q_m + g_{1,m}} \norm{g_{2,m} + g_{4,m}}.
    \end{align}
    Lemma \ref{lem gm1 bound} implies that $\norm{q_m + g_{1,m}}^2 \le (1 - \beta T_m) \norm{q_m}^2$ and $\norm{q_m + g_{1,m}} = \fO(\norm{q_m})$,
    Lemma \ref{lem gm2 bound} suggests that $\norm{g_{2,m}} = \fO\left(T_m^{2(1-\tau)}(\|q_m\|+1)\right)$,
    Lemma \ref{lem gm3 bound} suggests that $\norm{g_{3,m}} = \fO(T_m(\|q_m\|+1))$, and Lemma \ref{lem gm4 bound} suggests that $\norm{g_{4,m}} = \fO\left(T_m^2(\|q_m\|+1)\right)$. Hence,
    \begin{align}
        \norm{g_{2,m} + g_{3,m} + g_{4,m}} \le \norm{g_{2,m}} + \norm{g_{3,m}} + \norm{g_{4,m}} = \fO (T_m (\|q_m\|+1)),
    \end{align}
    and 
    \begin{align}
        \norm{g_{2,m} + g_{4,m}} = \fO \left(T_m^{2(1-\tau)} (\|q_m\|+1)\right).
    \end{align}
    Moreover,
    both $q_m$ and $g_{1,m} = - \bar \alpha_m A^\top A q_m$ are adapted to $\fF_{t_m + f(t_m)}$ and \ref{lem gm3 bound} implies that $\E\left[g_{3,m} | \fF_{t_m + f(t_m)}\right] = 0$.
    We, therefore, have 
    \begin{align}
        \E\left[(q_m + g_{1,m})^\top g_{3,m}| \fF_{t_m + f(t_m)}\right] = 0.
    \end{align}
    Lastly, putting everything together, we have
    \begin{align}
        &\E\left[\norm{q_{m+1}}^2 | \fF_{t_m + f(t_m)}\right] \\
        \le& (1 - \beta T_m) \norm{q_m}^2 + \fO (T_m (\|q_m\|+1))^2 + \fO(\norm{q_m}) \fO \left(T_m^{2(1-\tau)} (\|q_m\|+1)\right) \\
        \leq& (1 - \beta T_m) \norm{q_m}^2 + \fO \left(T_m^{2(1-\tau)} (\|q_m\|+1)^2\right) \\
        \leq& (1 - \beta T_m) \norm{q_m}^2 + \fO \left(T_m^{2(1-\tau)} (\|q_m\|^2+1)\right),
    \end{align}
    where the last inequality comes from the fact that $(\norm{q_m}+1)^2 \le 2(\|q_m\|^2 +1)$.
    In conclusion, there exists a constant $D$ such that
    \begin{align}
        \E\left[\norm{q_{m+1}}^2 | \fF_{t_m + f(t_m)}\right] \le \left(1 - \beta T_m + D T_m^{2(1-\tau)}\right) \norm{q_m}^2 + D T_m^{2(1-\tau)},
    \end{align}
    which completes the proof.
\end{proof}

\subsection{Proof of Lemma \ref{lem qm as convergence}}
\label{sec proof lem qm as convergence}
\begin{proof}
    To prove the lemma, 
    we will invoke a supermartingale convergence theorem stated as follows.
    \begin{theorem}
        \label{thm supermartingale}
        (Proposition 4.2 in \citet{bertsekas1996neuro})
        Let $Y_m$, $X_m$, and $Z_m$, $m \ge 0$ be three sequences of random variables and let $\bar \fF_m$, $m \ge 0$, be sets of random variables such that $\bar \fF_m \subseteq \bar \fF_{m+1}$ for all $m$. Suppose that
        \begin{enumerate}
            \item The random variables $Y_m$, $X_m$, and $Z_m$ are non-negative and are functions of the random variables in $\bar \fF_m$,
            \item For each $m$, we have $\E\left[Y_{m+1} | \bar \fF_m\right] \le Y_m - X_m + Z_m$,
            \item There holds $\sum_{m=0}^\infty Z_m < \infty$.
        \end{enumerate}
        Then, we have $\sum_{m=0}^\infty X_m < \infty$ almost surely, and the sequence $Y_m$ converges almost surely to a non-negative random variable $Y$.
    \end{theorem}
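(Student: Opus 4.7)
The plan is to use the classical Robbins–Siegmund construction: transform $\{Y_m\}$ into an honest supermartingale, apply Doob's nonnegative supermartingale convergence theorem via a stopping-time truncation, and then undo the transformation.

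First I would define the compensated process
\begin{align}
U_m \doteq Y_m + \sum_{k=0}^{m-1}(X_k - Z_k), \qquad m \ge 0,
\end{align}
with the empty-sum convention $U_0 = Y_0$. Assumption 1 ensures that each $U_m$ is $\bar\fF_m$-measurable, and a one-line telescoping using the hypothesis $\E[Y_{m+1}|\bar\fF_m] \le Y_m - X_m + Z_m$ yields $\E[U_{m+1}|\bar\fF_m] \le U_m$, so $\{U_m\}$ is a supermartingale with respect to $\{\bar\fF_m\}$. The main obstacle is that $U_m$ need not be nonnegative, so Doob's theorem cannot be applied directly.

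To sidestep this, for each constant $A > 0$ I would introduce the stopping time
\begin{align}
\tau_A \doteq \inf\qty{m \ge 0 : \sum_{k=0}^{m} Z_k > A}.
\end{align}
The stopped process $U_{m \wedge \tau_A}$ is again a supermartingale, and since $X_k \ge 0$ together with $\sum_{k=0}^{(m \wedge \tau_A)-1} Z_k \le A$ (by definition of $\tau_A$) gives $U_{m \wedge \tau_A} \ge -A$, the shifted process $U_{m \wedge \tau_A} + A$ is a nonnegative supermartingale. Doob's theorem then yields almost sure convergence of $U_{m \wedge \tau_A}$ to a finite limit. Assumption 3 gives $\sum_{k=0}^\infty Z_k < \infty$ a.s., so $\{\tau_A = \infty\} = \{\sum_{k=0}^\infty Z_k \le A\}$ and $P(\tau_A = \infty) \to 1$ as $A \to \infty$; taking the countable union over integer $A$, I would conclude that $U_m$ itself converges a.s.\ to some a.s.\ finite limit $U_\infty$.

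Finally I would unwind the compensation. Rewriting gives
\begin{align}
Y_m + \sum_{k=0}^{m-1} X_k \;=\; U_m + \sum_{k=0}^{m-1} Z_k \;\longrightarrow\; U_\infty + \sum_{k=0}^\infty Z_k \qquad \text{a.s.,}
\end{align}
a finite limit. Because $X_k \ge 0$, the partial sums $\sum_{k=0}^{m-1} X_k$ are non-decreasing; they cannot diverge to $+\infty$ (otherwise so would the left-hand side), hence $\sum_{k=0}^\infty X_k < \infty$ a.s., which is the first conclusion. Subtracting this convergent series from the displayed limit yields $Y_m \to Y \doteq U_\infty + \sum_{k=0}^\infty Z_k - \sum_{k=0}^\infty X_k$ a.s., and nonnegativity of the limit is inherited from that of $\{Y_m\}$, giving the second conclusion.
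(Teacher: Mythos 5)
Your proposal is correct, but note that the paper does not actually prove this statement: it is imported wholesale as Proposition 4.2 of \citet{bertsekas1996neuro} and used as a black box in the proof of Lemma~\ref{lem qm as convergence}, so there is no internal proof to compare against. What you have written is a valid self-contained proof, and it is essentially the classical Robbins--Siegmund argument: the compensation $U_m = Y_m + \sum_{k=0}^{m-1}(X_k - Z_k)$ does yield $\E[U_{m+1} \mid \bar\fF_m] \le U_m$ by adaptedness of $X_m, Z_m$ (hypothesis 1); your stopping time $\tau_A$ is legitimate since the partial sums of $Z$ are adapted, and the index bookkeeping checks out --- for $j \le \tau_A$ one has $\sum_{k=0}^{j-1} Z_k \le A$, so the stopped process is bounded below by $-A$ and Doob's nonnegative supermartingale convergence theorem applies after shifting; patching over integer $A$ uses exactly hypothesis 3 via $\qty{\tau_A = \infty} = \qty{\sum_{k=0}^\infty Z_k \le A}$; and the unwinding step correctly exploits monotonicity of $\sum_{k=0}^{m-1} X_k$ together with $Y_m \ge 0$ to extract both conclusions. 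The only point you gloss over is integrability: to speak of $U_m$ as a supermartingale (and of the conditional expectations in hypothesis 2) one either implicitly assumes $Y_m, X_m, Z_m$ are integrable or works with generalized conditional expectations of non-negative random variables, for which the non-negative (generalized) supermartingale convergence theorem still holds; this is harmless here, and in the paper's application the relevant quantities ($Y_m = \norm{q_m}^2$ with deterministic $Z_m = D T_m^{2(1-\tau)}$) are integrable anyway. In short: the paper buys the result by citation, while your argument reconstructs it from Doob's theorem via the standard truncation, which is exactly how such results are proved in the source.
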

    In our case, we let 
    \begin{align}
      Y_m =& \| q_m \|^2, \\  
      X_m =& \frac{1}{2} \beta T_m \| q_m \|^2, \\
      Z_m =& D T_m^{2(1-\tau)}, \\
      \bar \fF_{m} =& \fF_{t_m + f(t_m)}.
    \end{align}
    The first condition of Theorem~\ref{thm supermartingale} holds trivially.
    For the second condition of Theorem~\ref{thm supermartingale} to hold,
    we rely on~\eqref{eq qm sup} in Lemma~\ref{lem qm supermartingale property}.
    According to the definition of $T_m$ in~\eqref{eq def big tm},
    we have
    \begin{align}
        \lim_{m\to\infty} T_m = 0.
    \end{align}
    As a result,
    the condition 
    \begin{align}
        T_m \le \min\left(\frac{\beta}{2 H^4}, 1, \frac{\ln(2)}{2 H^2}\right)
    \end{align}
    in Lemma~\ref{lem qm supermartingale property} holds for sufficiently large $m$.
    Moreover, since 
    \begin{align}
        \tau <& \frac{3}{2} - \frac{1}{\nu} \explain{Assumption~\ref{assu gap}} \\
        \leq& \frac{3}{2} - \frac{1}{1} \explain{Assumption~\ref{assu lr}} \\
        =&\frac{1}{2},
    \end{align}
    we have
    \begin{align}
        2(1 - \tau) > 1.
    \end{align}
    Consequently,
    the condition
    \begin{align}
        D T_m^{2(1-\tau)} \le \frac{1}{2} \beta T_m,
    \end{align}
    which is equivalent to
    \begin{align}
        T_m^{2(1-\tau) - 1} \leq \frac{\beta}{2D},
    \end{align}
    also holds for sufficiently large $m$ as $\lim_{m\to\infty} T_m=0$.
    Crucially, $D$ is deterministic because $D$ only depends on $H$, $\beta$, $C_\tau$, $C_M$, $L(f, \chi)$, and $\chi$, which are all deterministic quantities.
    Therefore,
    there always exists a finite and deterministic $m_0$ such that the subsequence $\qty{X_m, Y_m, Z_m}_{m \geq m_0}$ verifies the second condition.
    Since $\eta > \frac{1}{2(1-\tau)}$, i.e. $2 (1-\tau) \eta > 1$, by p-test, we can deduce that
    \begin{align}
        \sum_{m=0}^\infty Z_m &= \sum_{m=0}^\infty D T_m^{2(1-\tau)} = D \sum_{m=0}^\infty \left(\frac{16 \max(C_\alpha, 1)}{(\eta + 1) (m+1)^\eta}\right)^{2(1-\tau)} \\
        &= \frac{(16 \max(C_\alpha, 1))^{2(1-\tau)} D}{(\eta+1)^{2(1-\tau)}} \sum_{m=0}^\infty \frac{1}{(\eta+1)^{2(1-\tau)\eta}} < \infty.
    \end{align}
    The third condition of Theorem~\ref{thm supermartingale}, 
    therefore, also holds.
    All the conditions of Theorem~\ref{thm supermartingale} are now verified for the subsequence $\qty{X_m, Y_m, Z_m}_{m\geq m_0}$,
    which implies that the sequence $\qty{\norm{q_m}^2}_{m\geq 0}$ converges
    and
    \begin{align}
        \frac{1}{2} \beta \sum_{m=0}^\infty T_m \norm{q_m}^2 < \infty.
    \end{align}
    As $\eta < 1$, we have
    \begin{align}
      \sum_{m=0}^\infty T_m = \sum_{m=0}^\infty \frac{16 \max(C_\alpha, 1)}{(\eta + 1) (m+1)^\eta} = \frac{16 \max(C_\alpha, 1)}{\eta + 1} \sum_{m=0}^\infty \frac{1}{(m+1)^\eta} = \infty.
    \end{align}
    Thus, $\norm{q_m}^2$ must converge to zero. 
    Otherwise, $\sum_{m=0}^\infty X_m = \frac{1}{2} \beta \sum_{m=0}^\infty T_m \norm{q_m}^2$ diverges to infinity. 
    Thus, $q_m$ converges to 0 almost surely, 
    which completes the proof.
\end{proof}

\subsection{Proof of Lemma \ref{lem finite sample bounds}}
\label{sec proof lem finite sample bounds} 
\begin{proof}
    First, we prove the bounds for each function.
    
    \textbf{Bound for $g_t(\cdot)$}:
        \begin{align}
            \| g_t(w) \| &= \norm{ - \hat{A}_{t+f(t)}^\top (\hat{A}_t w + \hat{b}_t) } \le \norm{\hat{A}_{t+f(t)}^\top} (\|\hat{A}_t\| \|w\| + \|\hat{b}_t\|) \\
            &\le H (H B + H) = H^2(B+1).
        \end{align}
        
    \textbf{Bound for $\bar g(\cdot)$}:
    \begin{align}
        \norm{\bar g(w)} = \norm{A^\top (A w + b)} \le \norm{A^\top} (\norm{A} \norm{w} + \norm{b}) \le H(H B + B) = H^2(B+1).
    \end{align}
        
    \textbf{Bound for $\Lambda_t(\cdot)$}:
    \begin{align}
        |\Lambda_t(w)| &= | \langle w - w_*, g_t(w) - \bar{g}(w) \rangle| \le \|w-w_*\| \|g_t(w)-\bar{g}(w)\| \\
        &\le (\|w\| + \|w_*\|) (\|g_t(w)\| - \|\bar{g}(w)\|) \le (B + B) [H^2(B+1) + H^2(B+1)] \\
        &= 4 H^2 B(B+1).
    \end{align}

    Hence, by taking $C_g = \max\{H^2(B+1), 4 H^2 B(B+1)\}$, we have the result stated.
    Second, we prove the functions are Lipschitz.
    
    \textbf{$g_t(\cdot)$ is $H^2$-Lipschitz}: 
    \begin{align}
        \norm{g_t(w) - g_t(w')} &= \norm{-\hat A_{t+f(t)}^\top (\hat A_t w + \hat b_t) + \hat A_{t+f(t)}^\top (\hat A_t w' + \hat b_t)} = \norm{\hat A_{t+f(t)}^\top \hat A_t (w - w')} \\
        &\le \norm{\hat A_{t+f(t)}^\top} \norm{\hat A_t} \norm{w - w'} \le H^2 \norm{w - w'}
    \end{align}
    
    \textbf{$\bar g(\cdot)$ is $H^2$-Lipschitz}: 
    \begin{align}
        \norm{\bar g(w) - \bar g(w')} &= \norm{- A^\top (A w + b) + A^\top (A w' + b)} = \norm{A^\top A (w'-w)} \\
        &\le \norm{A^\top} \norm{A} \norm{w'-w} \le H^2 \norm{w'-w}.
    \end{align}
    
    \textbf{$\Lambda_t(\cdot)$ is $2H^2 (3B + 1)$-Lipschitz}: 
    \begin{align}
        &|\Lambda_t(w) - \Lambda_t(w')| \\
        =& |\langle w-w_*, g_t(w) - \bar{g}(w)\rangle - \langle w'-w_*, g_t(w') - \bar{g}(w') \rangle| \\
        =& |\langle w-w_*, g_t(w) - \bar{g}(w) - (g_t(w') - \bar{g}(w'))\rangle + \langle w-w_* - (w' - w_*), g_t(w') - \bar{g}(w') \rangle| \\
        \le& |\langle w-w_*, g_t(w) - \bar{g}(w) - (g_t(w') - \bar{g}(w'))\rangle| + |\langle w - w', g_t(w') - \bar{g}(w') \rangle| \\
        \le& \|w-w_*\| \|g_t(w) - \bar{g}(w) - (g_t(w') - \bar{g}(w'))\| + \|w - w'\| \|g_t(w') - \bar{g}(w')\| \\
        \le& (\|w\| + \|w_*\|)(\|g_t(w') - g_t(w)\| + \|\bar{g}(w) - \bar{g}(w')\|)  + (\|g_t(w')\|  + \|\bar{g}(w')\|)\|w - w'\| \\
        \le& (B + B) (\|g_t(w') - g_t(w)\| + \|\bar{g}(w) - \bar{g}(w')\|) + [H^2 (B+1)  + H^2 (B+1)]\|w - w'\| \\
        \le& 2B (\|g_t(w') - g_t(w)\| + \|\bar{g}(w) - \bar{g}(w')\|) + 2 H^2 (B+1)\|w - w'\| \\
        \le& 2B (H^2 \|w' - w\| + H^2 \|w' - w\|) + 2 H^2 (B+1)\|w - w'\| \\
        =& 2 H^2 (3B+1) \|w-w'\|.
    \end{align}
    
    Therefore, by taking $C_{Lip} = \max\{H^2, 2H^2 (3B + 1)\}$, we have the result stated.
    
    \noindent Lastly, we prove the following inequality regarding the inner product.
    \begin{align}
        \langle w - w', \bar{g}(w) - \bar{g}(w') \rangle &= \left\langle w - w', - A^\top (A w + b) + A^\top (A w' + b) \right\rangle \\
        & = - (w - w')^\top A^\top A (w - w') \le - \beta \|w-w'\|^2,
    \end{align}
    where the last inequality holds due to~\eqref{eq pd beta}. 
\end{proof}

\subsection{Proof of Lemma \ref{lem expected lambda bound}}
\label{sec proof lem expected lambda bound}
\begin{proof}
    For any $i \ge 0$, since $w_i$ lies in the ball for projection,
    we have 
    \begin{align}
        \norm{w_{i+1} - w_i} &= \norm{\Gamma(w_i + \alpha_i g_i(w_i)) - w_i} \\
        &= \norm{\Gamma(w_i + \alpha_i g_i(w_i)) - \Gamma(w_i)} \\
        &\leq \norm{\Gamma(w_i + \alpha_i g_i(w_i) - w_i)}  \explain{$\Gamma(\cdot)$ is non-expansive}\\
        &= \norm{\Gamma(\alpha_i g_i(w_i))}  \\
        &\le \norm{\alpha_i g_i(w_i)} \\
        &= \alpha_i \norm{g_i(w_i)} \\
        &\le \alpha_i C_g \explain{Lemma~\ref{lem finite sample bounds}}.
    \end{align}
    Therefore, by telescoping, we can deduce that for all $0 < k < t$, 
    \begin{align}
        \norm{w_t - w_{t-k}} \le \sum_{i=t-k}^{t-1} \norm{w_{i+1} - w_i} \le \sum_{i=t-k}^{t-1} C_g \alpha_i = C_g \sum_{i=t-k}^{t-1} \alpha_i.
    \end{align}
    Because $\frac{1}{t+1}$ is a decreasing function in $t$, for all $i\in[t, t+1]$, $\frac{1}{t+1} \le \frac{1}{i}$. 
    As a consequence,
    $\frac{1}{t+1} \le \int_{t}^{t+1} \frac{1}{i}$, and
    \begin{align}
       \sum_{i=t-k}^t \frac{1}{i+1} \le \sum_{i=t-k}^t \int_{i}^{i+1} \frac{1}{j} \le \int_{t-k}^t \frac{1}{i} = \ln(t) - \ln(t-k) = \ln\left(\frac{t}{t-k}\right). 
    \end{align}
    In addition, given that $\alpha_t = \frac{C_\alpha}{t+1}$, 
    we have
    \begin{align}
        \norm{w_t - w_{t-k}} &\le C_g \sum_{i=t-k}^{t-1} \alpha_i = C_g \sum_{i=t-k}^{t-1} \frac{C_\alpha}{i+1} \le C_g C_\alpha \ln\left(\frac{t}{t-k}\right).
    \end{align}
    Applying Lemma \ref{lem finite sample bounds} again, we get
    \begin{align}
       |\Lambda_t(w_t) - \Lambda_t(w_{t-k})| \le C_g C_\alpha C_{Lip} \ln\left(\frac{t}{t-k}\right) 
    \end{align}
    and hence
    \begin{align}
        \label{eq lambda wt in terms of lambda wt-k}
        \Lambda_t(w_t) \le \Lambda_t(w_{t-k}) + C_g C_\alpha C_{Lip} \ln\left(\frac{t}{t-k}\right).
    \end{align}

   We now bound the expectation of $\Lambda_t(w_{t-k})$ conditioning on $\fF_{t-k + f(t-k)}$.
   In particular, we have
    \begin{align}
        &\E\left[\Lambda_t(w_{t-k}) | \fF_{t-k + f(t-k)}\right] \\
        =& \E\left[\langle w_{t-k} - w_*, g_t(w_{t-k}) - \bar g(w_{t-k}) \rangle | \fF_{t-k + f(t-k)}\right] \\
        =& \E\left[\left\langle w_{t-k} - w_*, \hat A_{t+f(t)}^\top (\hat A_t w_{t-k} + \hat \hat{b}_t) - A^\top (A w_{t-k} - b) \right\rangle | \fF_{t-k + f(t-k)}\right] \\
        =& \E\left[\left\langle w_{t-k} - w_*, \left(\hat A_{t+f(t)}^\top \hat A_t - A^\top A\right) w_{t-k} - \left(\hat A_{t+f(t)}^\top \hat b_t - A^\top b\right) \right\rangle | \fF_{t-k + f(t-k)}\right]. \\
    \end{align}
    As expectation and dot product are linear and $w_{t-k}$ is adapted to $\fF_{t-k + f(t-k)}$, 
    we can further reduce our expectations as
    \begin{align}
        &\E\left[\Lambda_t(w_{t-k}) | \fF_{t-k + f(t-k)}\right] \\
        =&\left\langle w_{t-k} - w_*, \left(\E\left[\hat A_{t+f(t)}^\top \hat A_t| \fF_{t-k + f(t-k)}\right]  - A^\top A\right) w_{t-k} - \left(\E\left[\hat A_{t+f(t)}^\top \hat b_t  | \fF_{t-k + f(t-k)}\right] - A^\top b\right) \right\rangle \\
        \le& \norm{w_{t-k} - w_*} \norm{\left(\E\left[\hat A_{t+f(t)}^\top \hat A_t| \fF_{t-k + f(t-k)}\right]  - A^\top A\right) w_{t-k} - \left(\E\left[\hat A_{t+f(t)}^\top \hat b_t  | \fF_{t-k + f(t-k)}\right] - A^\top b\right)} \\
        \le& (\norm{w_{t-k}} + \norm{w_*}) \left(\norm{\left(\E\left[\hat A_{t+f(t)}^\top \hat A_t| \fF_{t-k + f(t-k)}\right]  - A^\top A\right) w_{t-k}} + \norm{\E\left[\hat A_{t+f(t)}^\top \hat b_t  | \fF_{t-k + f(t-k)}\right] - A^\top b}\right) \\
        \le& (\norm{w_{t-k}} + \norm{w_*}) \left(\norm{\E\left[\hat A_{t+f(t)}^\top \hat A_t| \fF_{t-k + f(t-k)}\right]  - A^\top A} \norm{w_{t-k}} + \norm{\E\left[\hat A_{t+f(t)}^\top \hat b_t  | \fF_{t-k + f(t-k)}\right] - A^\top b}\right) \\
        \le& (B + B) \left(\norm{\E\left[\hat A_{t+f(t)}^\top \hat A_t| \fF_{t-k + f(t-k)}\right]  - A^\top A} B + \norm{\E\left[\hat A_{t+f(t)}^\top \hat b_t  | \fF_{t-k + f(t-k)}\right] - A^\top b}\right) \\
        \le& 2B\left(\norm{\E\left[\hat A_{t+f(t)}^\top \hat A_t| \fF_{t-k + f(t-k)}\right]  - A^\top A} B + \norm{\E\left[\hat A_{t+f(t)}^\top \hat b_t  | \fF_{t-k + f(t-k)}\right] - A^\top b}\right).
    \end{align}
    Applying Lemma \ref{lem stochastic estimates}, 
    we get
    \begin{align}
        &\E[\Lambda_t(w_{t-k}) | \fF_{t-k + f(t-k)}]
        \le& 2B(B+1) C_M \begin{cases}
            1 & t < t-k + f(t-k)\\
            \chi^{f(t)} + \chi^{t - (t-k + f(t-k))} & t-k + f(t-k) \le t\\
        \end{cases}.
    \end{align}
    Taking total expectations then yields 
    \begin{align}
      \label{eq expectation of lambda wt-k}
      \E[\Lambda_t(w_{t-k})] \le 2B(B+1) C_M \begin{cases}
            1 & k < f(t-k)\\
            \chi^{f(t)} + \chi^{t - (t-k + f(t-k))} & k \ge f(t-k)\\
        \end{cases}.
    \end{align}
    Plugging \eqref{eq expectation of lambda wt-k} into the expectation of \eqref{eq lambda wt in terms of lambda wt-k} yields
    \begin{align}
        &\E[\Lambda_t(w_t)]
        \le& C_g C_\alpha C_{Lip} \ln\left(\frac{t}{t-k}\right) + 2B(B+1) C_M \begin{cases}
            1 & k < f(t-k)\\
            \chi^{f(t)} + \chi^{t - (t-k + f(t-k))} & k \ge f(t-k)
            \end{cases},
    \end{align}
    which completes the proof.
\end{proof}

\end{document}